\documentclass{article}

\usepackage{microtype}
\usepackage{graphicx}
\usepackage{subcaption}
\usepackage{booktabs} 

\usepackage{hyperref}

\usepackage[accepted]{icml2026}

\usepackage{amsmath}
\usepackage{amssymb}
\usepackage{mathtools}
\usepackage{amsthm}

\usepackage[capitalize,noabbrev]{cleveref}

\theoremstyle{plain}
\newtheorem{theorem}{Theorem}[section]

\newtheorem{lemma}[theorem]{Lemma}

\theoremstyle{definition}
\newtheorem{definition}[theorem]{Definition}

\theoremstyle{remark}
\newtheorem{remark}[theorem]{Remark}

\usepackage[textsize=tiny]{todonotes}

\usepackage{multirow}
\usepackage[table]{xcolor}

\usepackage{amsmath,amsfonts,bm}

\def\eqref#1{equation~\ref{#1}}

\def\1{\bm{1}}

\def\vc{{\bm{c}}}

\def\vr{{\bm{r}}}

\def\vv{{\bm{v}}}

\def\vx{{\bm{x}}}

\def\v\epsilon{{\bm{\epsilon}}}

\def\mI{{\bm{I}}}

\DeclareMathAlphabet{\mathsfit}{\encodingdefault}{\sfdefault}{m}{sl}
\SetMathAlphabet{\mathsfit}{bold}{\encodingdefault}{\sfdefault}{bx}{n}

\newcommand{\dd}{\mathrm{d}}

\usepackage{wrapfig}
\usepackage{algorithm}
\usepackage{algorithmic}
\usepackage{threeparttable}

\icmltitlerunning{Alleviating Sparse Rewards by Modeling Step-Wise and Long-Term Sampling Effects in Flow-Based GRPO}

\begin{document}

\twocolumn[
  \icmltitle{Alleviating Sparse Rewards by Modeling Step-Wise and Long-Term Sampling Effects in Flow-Based GRPO}



  \icmlsetsymbol{equal}{*}
  \icmlsetsymbol{corr}{$\dagger$}

  \begin{icmlauthorlist}
    \icmlauthor{Yunze Tong}{zju,equal}
    \icmlauthor{Mushui Liu}{zju,comp,equal,corr}
    \icmlauthor{Canyu Zhao}{zju}
    \icmlauthor{Didi Zhu}{zju}
    \icmlauthor{Wanggui He}{comp}
    \icmlauthor{Shiyi Zhang}{thu}
    \icmlauthor{Hongwei Zhang}{zju}
    \icmlauthor{Peng Zhang}{comp}
    \icmlauthor{Jinlong Liu}{comp}
    \icmlauthor{Hao Jiang}{comp,corr}
  \end{icmlauthorlist}

  \icmlaffiliation{zju}{Zhejiang University, Hangzhou, China}
  \icmlaffiliation{comp}{Taobao \& Tmall Group of Alibaba, Hangzhou, China}
  
  \icmlaffiliation{thu}{Tsinghua University, Beijing, China}

  \icmlcorrespondingauthor{Mushui Liu, Hao Jiang}{lms@zju.edu.cn, aoshu.jh@taobao.com}

  \icmlkeywords{Machine Learning, ICML}

  \vskip 0.3in
]



\printAffiliationsAndNotice{\icmlEqualContribution}

\begin{abstract}

    Deploying GRPO on Flow Matching models has proven effective for text-to-image generation. However, existing paradigms typically propagate an outcome-based reward to all preceding denoising steps without distinguishing the local effect of each step. 
    Moreover, current group-wise ranking mainly compares trajectories at matched timesteps and ignores within-trajectory dependencies, where certain early denoising actions can affect later states via delayed, implicit interactions. 
    We propose TurningPoint-GRPO (TP-GRPO), a GRPO framework that alleviates step-wise reward sparsity and explicitly models long-term effects within the denoising trajectory. 
    TP-GRPO makes two key innovations: (i) it replaces outcome-based rewards with step-level incremental rewards, providing a dense, step-aware learning signal that better isolates each denoising action’s ``pure" effect, and (ii) it identifies turning points—steps that flip the local reward trend and make subsequent reward evolution consistent with the overall trajectory trend—and assigns these actions an aggregated long-term reward to capture their delayed impact. Turning points are detected solely via sign changes in incremental rewards, making TP-GRPO efficient and hyperparameter-free. Extensive experiments also demonstrate that TP-GRPO exploits reward signals more effectively and consistently improves generation.
    Code is available at 
    \url{https://github.com/YunzeTong/TurningPoint-GRPO}.
\end{abstract}

\section{Introduction}

Flow Matching (FM) models  \cite{lipman2022flow, rectifiedflow} can transport a simple prior to a complex target distribution via the learned velocity field, and have been widely adopted for text-to-image generation. Motivated by recent progress in large language models, researchers have applied Group Relative Policy Optimization (GRPO) \cite{grpo, guo2025deepseek} to FM models. Representative methods such as Flow-GRPO \cite{flow-grpo} and DanceGRPO \cite{dance-grpo} evaluate a reward on the final generated image and assign this same terminal reward to each preceding denoising step produced by a Stochastic Differential Equation (SDE)-based sampler \cite{score-based-method}. Advantages computed from these per-step rewards enable effective post-RL fine-tuning and can improve overall performance.


However, this design does not accurately model step-level reward assignment. It leads to two issues: (1) The reward reflects the aggregate effect of the entire denoising trajectory and is identically assigned to every step, without distinguishing the contribution of different steps, which induces reward sparsity. (2) Existing methods mainly leverage trajectory-level ranking across sampled trajectories, while neglecting within-trajectory interactions among steps, even though such dependencies are crucial for composing a coherent sample.


We illustrate the first issue by sampling several trajectories and visualizing their step-wise reward evolution in Figure~\ref{fig: intuition}. For a latent at the intermediate step, we perform Ordinary Differential Equation (ODE) sampling for the remaining steps to obtain a clean image on which the reward can be evaluated. This procedure is motivated by the fact that, compared to SDE sampling, ODE sampling removes stochasticity while preserving the same marginal distributions \cite{score-based-method}; thus, the ODE completion can be interpreted as an average over possible SDE outcomes from the same intermediate state. With this estimator, Figure~\ref{fig: intuition} shows that the reward can oscillate frequently during denoising. In contrast, Flow-GRPO assigns only the reward of clean images to all preceding steps, which captures the relative ordering of complete trajectories but may conflict with local progress. For example, from $t=6$ to $t=5$, the \textcolor[HTML]{FF7F0E}{orange} and \textcolor[HTML]{2CA02C}{green} trajectories exhibit a local reward decrease; yet because their full SDE-based samples achieve higher terminal rewards at $t=0$, they receive larger advantages for this step, incorrectly reinforcing an action that locally degrades reward. Overall, this outcome-based reward allocation cannot isolate the ``pure" gain of each denoising step. It therefore induces step-level reward sparsity and global--local misalignment, which can limit the effectiveness of RL fine-tuning.

\begin{figure}[!t]
    \centering
    \includegraphics[width=\linewidth]{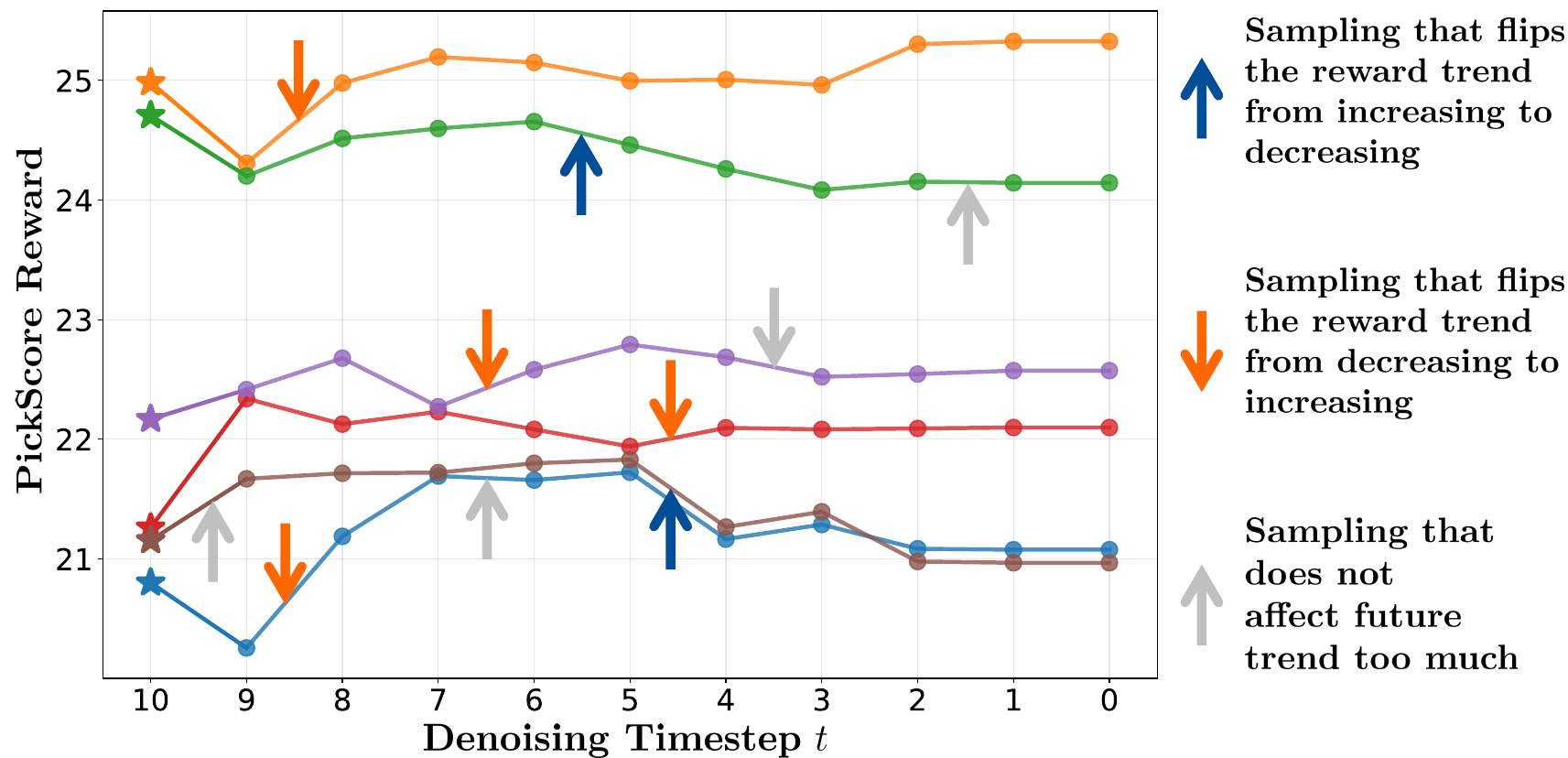}
    \caption{Rewards of several sampled trajectories. Each dot at $t$ is obtained by $(10-t)$ steps of SDE sampling followed by $t$ steps of ODE sampling. The leftmost point corresponds to full ODE sampling, and the rightmost to full SDE sampling (\textit{i.e.}, standard Flow-GRPO outputs).
    }
    \label{fig: intuition}
    \vspace{-20pt}
\end{figure}

The second issue stems from the sequential nature of FM generation. Each denoising action affects not only the immediate next latent, but also subsequent denoising behavior and thus the future trajectory. An intermediate state $\vx_t$ depends implicitly on earlier states $\vx_{t+2},\dots,\vx_T$, since different upstream trajectories may lead to different $\vx_{t+1}$ and thereby change the starting point of the current update. We refer to this delayed dependence as \emph{implicit interaction}. This effect is also reflected in the reward dynamics: in Figure~\ref{fig: intuition}, the local reward trend can temporarily deviate from the overall trend from noise to the final image, and later be restored by a critical step that flips the trend. We call such steps \emph{turning points}, which are common in practice (\textit{e.g.}, $t=9$ on the \textcolor[HTML]{1F77B4}{blue} trajectory, $t=6$ on the \textcolor[HTML]{2CA02C}{green} trajectory, and $t=7$ on the \textcolor[HTML]{9467BD}{purple} trajectory). Their impact is not limited to the immediate step reward, but also shapes later reward evolution until the end of denoising. However, GRPO forms groups by aligning steps at the same $t$, which mainly supports cross-trajectory ranking but ignores within-trajectory dynamics induced by implicit interaction, leaving the rewards of such key turning points under-modeled.

To address these issues, we propose \textbf{T}urning\textbf{P}oint-GRPO (TP-GRPO), a framework that aligns the nature of FM and enhances Flow-based GRPO. TP-GRPO has two folds. First, to reduce the misalignment between local steps and the overall trajectory, we introduce an \textbf{incremental-effect-based step-wise reward} that replaces the sparse outcome-based reward. Concretely, our step-aware reward is defined as the difference between rewards obtained after and before a single SDE sampling, thereby more faithfully reflecting the relative quality of individual denoising actions within each group. Second, to explicitly account for the delayed effects of key steps—an aspect not modeled in prior work-we further design an \textbf{aggregation-based implicit interaction modeling} mechanism. We formally define \emph{turning points} as steps that flip the local reward trend, such that taking SDE sampling at that step makes the subsequent reward evolution consistent with the overall trend. For these turning-point actions,
we assign an aggregated long-term reward as feedback. This encourages directions that are likely to improve future reward trends and discourages those leading to overall reward drops. Crucially, turning points are identified solely by sign changes in incremental rewards, not their magnitudes, making our method efficient and hyperparameter-free.


We summarize our contributions as follows:
\begin{itemize}
    \item We identify reward sparsity and step-level misalignment caused by  propagating an outcome-based reward to intermediate denoising steps. We address this by using step-wise reward differences to capture the incremental effect of each SDE update, yielding a better estimate of a single step’s ``pure'' gain.
    \item Based on this fine-grained signal, we uncover turning points—steps that flip the local reward trend to match the overall trajectory. We provide a strict sign-based criterion to identify them and assign aggregated long-term rewards to model their delayed impact. To our knowledge, this is the first work to explicitly model such implicit interaction in Flow-based GRPO.
    \item Building on these insights, we propose \textbf{TurningPoint-GRPO}, which mitigates reward sparsity and improves delayed credit assignment. Extensive experiments further demonstrate the effectiveness of our method.
\end{itemize}


\section{Related Work}

\noindent \textbf{Diffusion-based Models.}
Diffusion-based models have been widely adopted in a broad range of generative modeling tasks, including image generation \cite{ying2024restorerid, liu2024llm4gen, liu2025tfcustom, zhao2025freercustom, hu2025asynchronous, tong2025decoding}, video generation \cite{zhao2024moviedreamer, she2025customvideox}, 3D generation \cite{zhao2025tinker, zhao2025diception, tan2026easytune}, and tabular data synthesis \cite{tabsyn, tong2025latent}. These models learn data distributions through a gradual noising and denoising process, and have demonstrated strong performance across diverse domains. Their probabilistic formulation provides a flexible framework for modeling complex and high-dimensional data, contributing to their growing adoption in recent years.

\noindent \textbf{Common RL Techniques.}
Reinforcement learning (RL) has become a central tool for aligning large language models (LLMs) with human preferences and downstream objectives~\cite{guo2025deepseek, jaech2024openai}. A standard pipeline first trains a reward model from human preference data, and then optimizes the policy using proximal policy optimization (PPO)~\cite{ppo}. To better exploit multiple candidates per prompt, group-based ranking objectives have been proposed. In particular, GRPO~\cite{grpo} and its variants~\cite{gspo, dapo} have been widely adopted for LLM post-training.

\noindent \textbf{RL for Diffusion and Flow Models.}
Motivated by these advances in LLM alignment, recent work has explored RL for diffusion-based models. Several methods apply Direct Preference Optimization (DPO)~\cite{yang2024using, wallace2024diffusion, hu2025d, hu2025towards, liang2025aesthetic} or PPO-style algorithms~\cite{black2023training, fan2023reinforcement, miao2024training, tong2025noise, zhang2026aligning} to modify the denoising trajectories. 
More recently, some methods~\cite{flow-grpo, zheng2025diffusionnft} adapt GRPO-style objectives for fine-tuning flow models. This line of work~\cite{tempflowgrpo, li2025mixgrpo, densegrpo, zhao2026marble} introduces stochasticity via SDE-based sampling during denoising to form candidate groups and encourage exploration, leading to improved performance.


\section{Preliminary}\label{section: preliminary}

\subsection{Flow Matching}\label{preliminary: flow matching}

Flow Matching \cite{lipman2022flow} learns a time-dependent model $\vv_\theta(\vx_t, t)$ that approximates the velocity field transporting a prior $p_0$ to the target distribution $p_1$. Sampling is performed by integrating the deterministic ODE
\begin{equation}
    \dd\vx_t = \vv_\theta(\vx_t, t)\,\dd t.
\end{equation}
Recent works \cite{flow-grpo, dance-grpo} convert this ODE sampler into an equivalent SDE sampler. For rectified flow \cite{rectifiedflow}, the SDE-based update rule is
{\small
\begin{equation}
\begin{aligned}\label{eq: SDE sampling}
    \vx_{t+\Delta t} 
    &= \vx_t +
    \left[
        \vv_{\theta}(\vx_t,t)
        + \frac{\sigma_t^2}{2t}\bigl(\vx_t + (1-t)\vv_{\theta}(\vx_t,t)\bigr)
    \right]\Delta t \\
    &\quad + \sigma_t\sqrt{\Delta t}\,\v\epsilon,
\end{aligned}
\end{equation}
}
where $\sigma_t = \alpha\sqrt{\tfrac{t}{1-t}}$ and $\v\epsilon \sim \mathcal{N}(\mathbf{0}, \mI)$. $\alpha$ is a scalar hyper-parameter for noise level control. 
Previous works \cite{score-based-method, EDM} have also shown that an SDE sampler can induce the same marginal distributions at each noise level as its corresponding ODE sampler. The key difference is that stochasticity in the ODE sampler arises solely from the random initialization, whereas in the SDE sampler it comes from both the initial noise and the diffusion term along the trajectory.


\subsection{Adopting SDE-sampler in FM for GRPO}\label{preliminary: use GRPO in FM}
GRPO is an online RL method that leverages the contrast among self-sampled trajectories to guide policy exploration. A key requirement for applying GRPO is to obtain diverse rollouts conditioned on the same context or input.
To deploy it on flow matching, current methods \cite{flow-grpo} adopt the SDE sampling in Eq.~\ref{eq: SDE sampling} to inject stochasticity into the denoising process, thereby producing diverse trajectories and corresponding final images. 
Taking Flow-GRPO as an example, given a prompt $\vc$, the flow model $p_\theta$ samples a group of $G$ individual images \(\{\vx^i_0\}_{i=1}^G\) and the corresponding reverse-time trajectories \(\{(\vx^i_T, \vx^i_{T -1}, \cdots, \vx^i_0)\}_{i=1}^G\). Then, the advantage of the \(i\)-th image is computed by normalizing the group-level rewards:
\begin{equation}\label{eq: GRPO advantage}
\hat{A}^i_t = \frac{R(\vx^i_0, \vc) - \text{mean}(\{R(\vx^i_0, \vc)\}_{i=1}^G)}{\text{std}(\{R(\vx^i_0, \vc)\}_{i=1}^G)}.
\end{equation}
The policy model is then optimized by maximizing: 
\begin{equation}\label{eq: GRPO computation}
\mathcal{J}_\text{Flow-GRPO}(\theta) = \mathbb{E}_{\vc\sim \mathcal{C}, \{\vx^i\}_{i=1}^G\sim \pi_{\theta_\text{old}}(\cdot\mid \vc)} f(\vr,\hat{A},\theta, \varepsilon, \beta).
\end{equation}
With $\vr^i_t(\theta) =
\frac{p_{\theta}(\vx^i_{t-1} \mid \vx^i_t, \vc)}{p_{\theta_{\text{old}}}(\vx^i_{t-1} \mid \vx^i_t, \vc)}$, the computation becomes:
{\small
\begin{equation}
\begin{aligned}
f(&\ r, \hat{A}, \theta, \varepsilon, \beta) =
\frac{1}{G}\sum_{i=1}^{G} \frac{1}{T}\sum_{t=0}^{T-1} \Bigg( 
\min \Big( \vr^i_t(\theta) \hat{A}^i_t,  \\
&\ \text{clip} \Big( \vr^i_t(\theta), 1 - \varepsilon, 1 + \varepsilon \Big) \hat{A}^i_t \Big)
- \beta D_{\text{KL}}(\pi_{\theta} || \pi_{\text{ref}}) 
\Bigg).
\end{aligned}
\end{equation}
}

\section{Analysis on Limitations of Terminal Reward Assignment}


\subsection{Reward Sparsity and Misaligned Per-Step Reward Allocation}\label{preliminary: reward sparsity}


In this subsection, we identify the problem of reward sparsity, which leads to inaccurate stepwise credit assignment. For each trajectory in a group, the reward is computed only once from the final clean image $\vx_0^i$, and this single scalar is then assigned uniformly to all denoising steps along the trajectory. Consequently, once a group is fixed, the relative ordering of advantages computed in Eq.~\ref{eq: GRPO advantage} is constant at each timestep. This reward design naturally induces sparsity, since Flow-GRPO optimizes all timesteps under SDE sampling while receiving supervision only from the final outcome. The terminal reward reflects cumulative feedback over the entire trajectory and is per-timestep agnostic, with no direct dependency on the individual denoising actions at each step. Moreover, using the final reward and its normalized advantage as a surrogate for all intermediate steps yields inaccurate contribution attribution. Prior works have shown that denoising steps at different timesteps contribute unequally to the final generation~\cite{karras2024guidingdiffusionmodelbad, kynkaanniemi2024applying}. Uniformly assigning the same reward to every timestep implicitly assumes identical contribution from each denoising update, ignoring the heterogeneity of denoising operations and consequently overestimating or underestimating the influence of certain timesteps in the overall generation process.


\begin{figure}[ht]
    \centering
    \begin{subfigure}{0.49\linewidth}
        \centering
        \includegraphics[width=\linewidth]{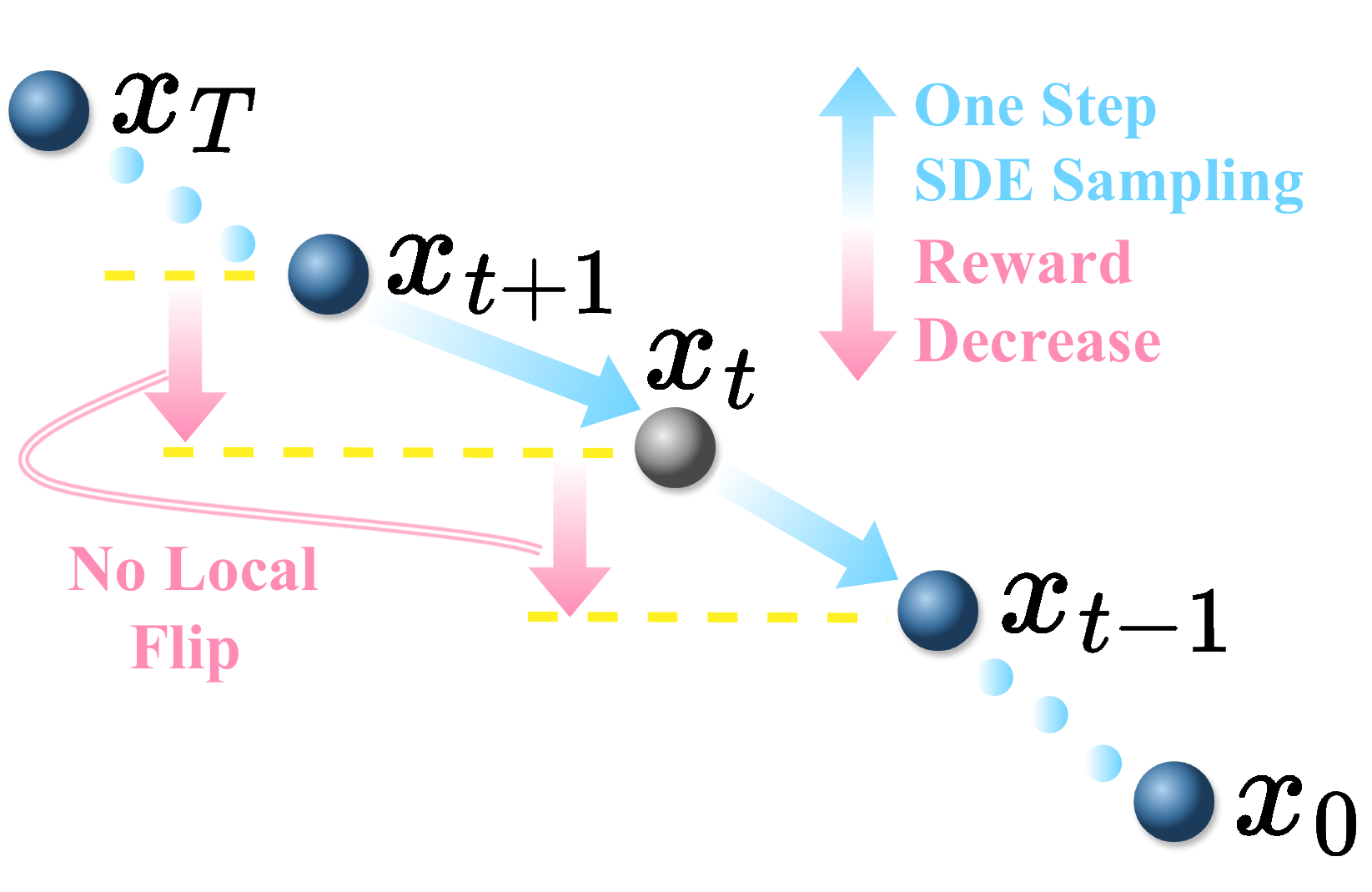}
        \caption{Normal point that does not satisfy the local flip condition.}
        \label{subfig: non-informative one without flip}
    \end{subfigure}
    \hfill
    \begin{subfigure}{0.49\linewidth}
        \centering
        \includegraphics[width=\linewidth]{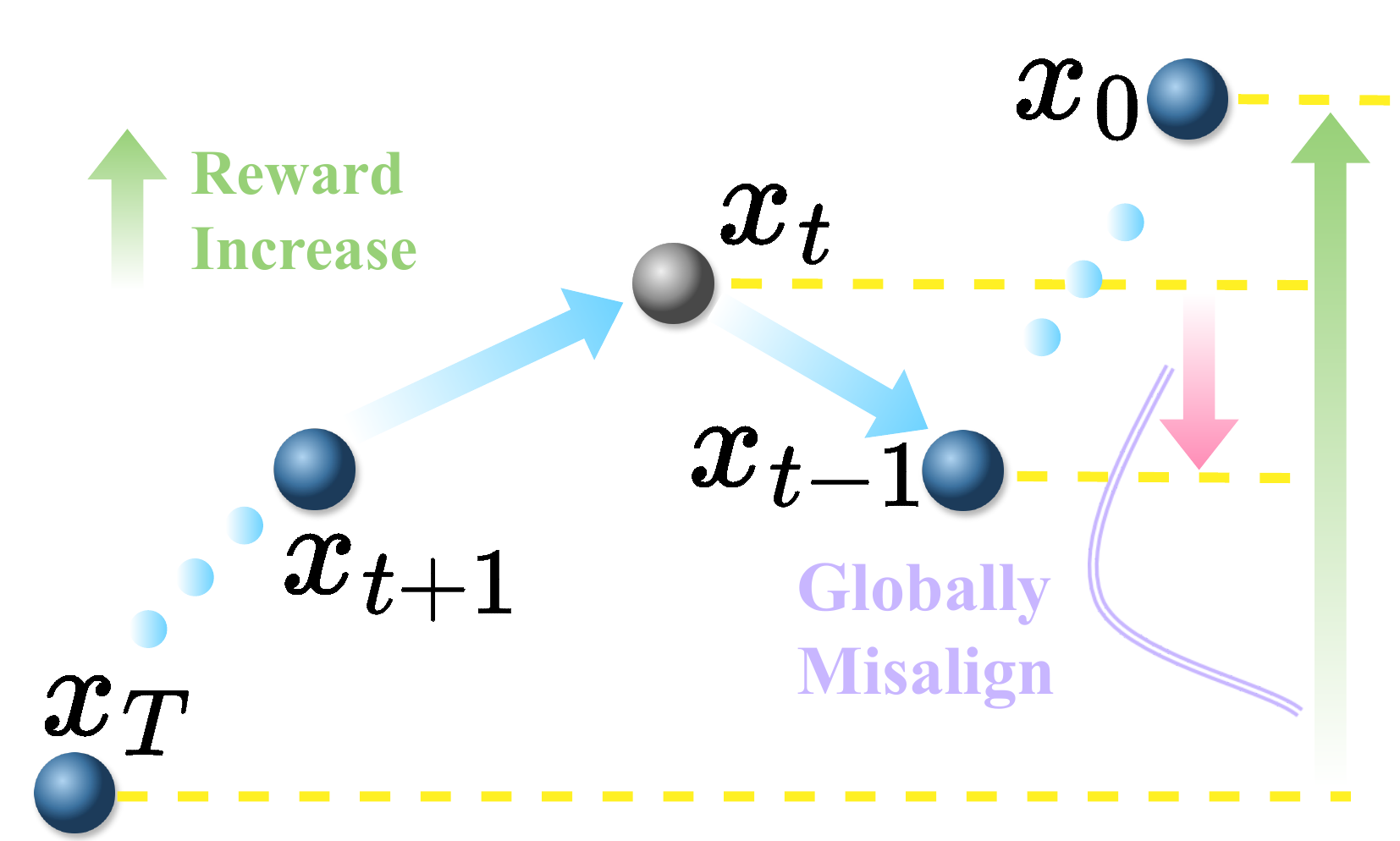}
        \caption{Normal point that does not align with the overall trend.}
        \label{subfig: non-informative one violating consistency}
    \end{subfigure}
    \\
    \begin{subfigure}{0.49\linewidth}
        \centering
        \includegraphics[width=\linewidth]{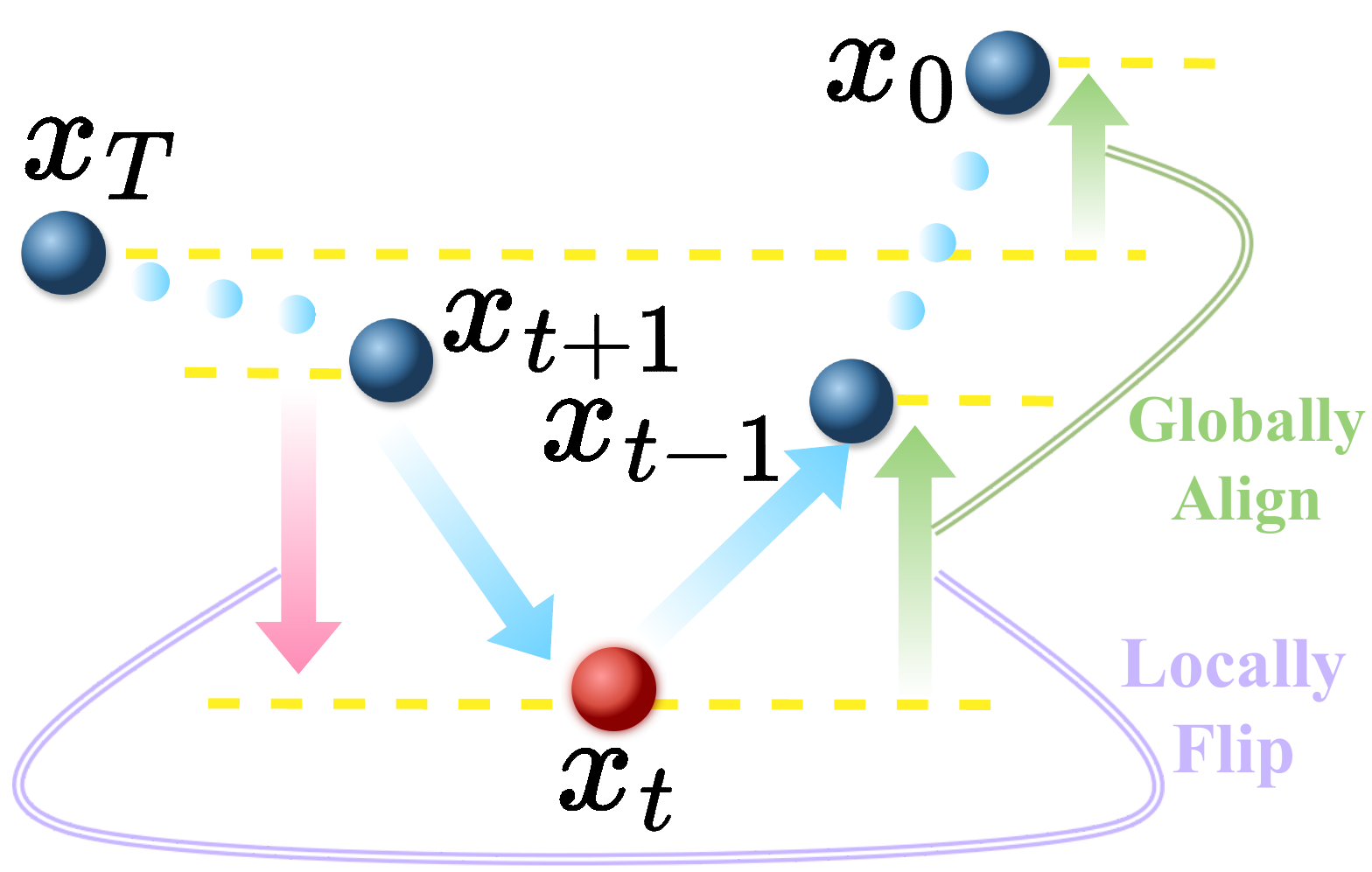}
        \caption{Good turning point that changes a decreasing trend to an increasing one.}
        \label{subfig: good turning point}
    \end{subfigure}
    \hfill
    \begin{subfigure}{0.49\linewidth}
        \centering
        \includegraphics[width=\linewidth]{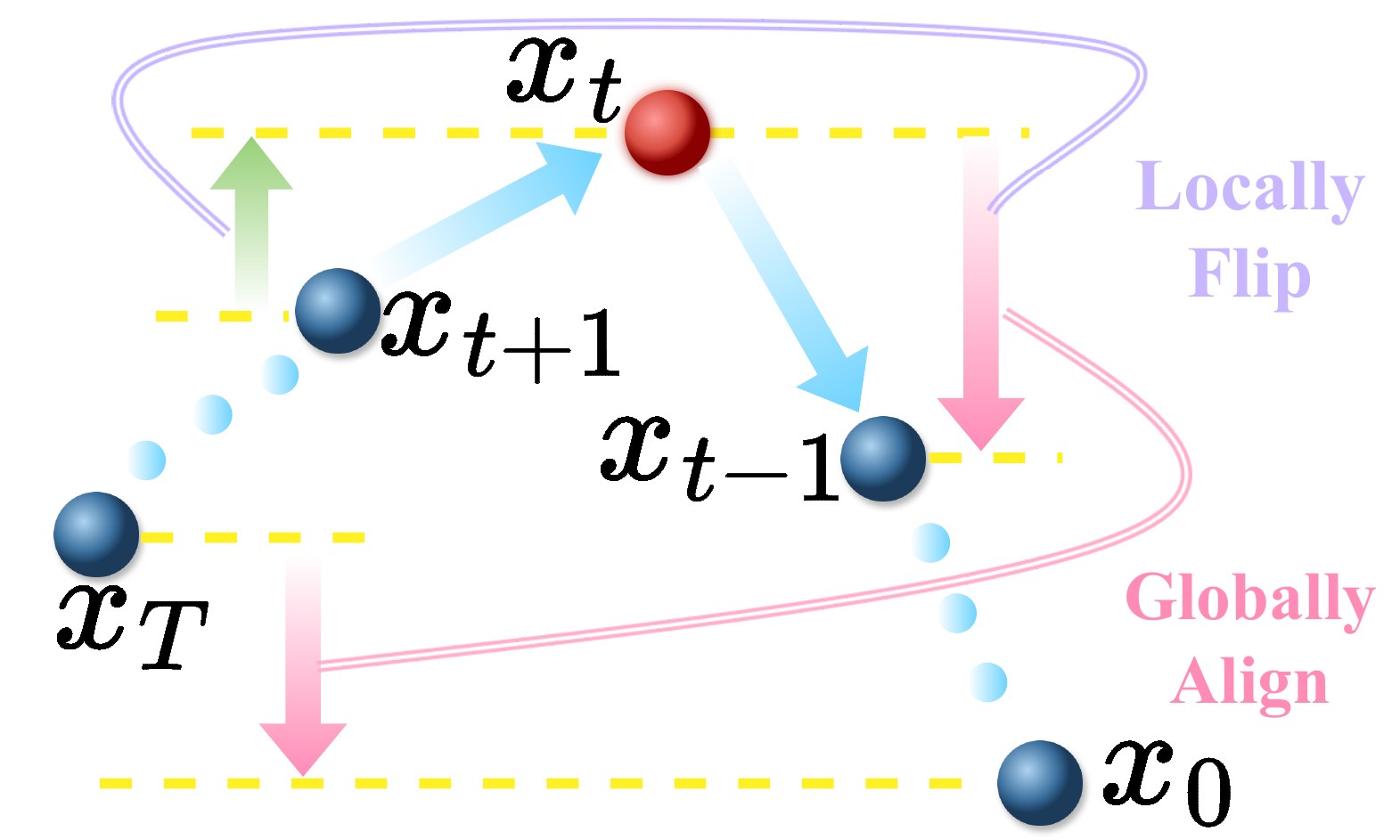}
        \caption{Bad turning point that changes an increasing trend to a decreasing one.}
        \label{subfig: bad turning point}
    \end{subfigure}

    \caption{Some cases that are or are not identified as turning points. The first row shows cases that do not satisfy our turning-point definition and are optimized with $r_t$. The second row shows cases that do satisfy it and are optimized with $r_t^\text{agg}$.}
    \label{fig: case for illustration}
    \vspace{-15pt}
\end{figure}


\subsection{Turning-Point Effects and Insufficient Implicit Cross-Step Interaction Modeling}\label{preliminary: turning point induce implicit interaction}

In this subsection, we show that, due to the iterative denoising nature of FM, there exists an implicit interaction across steps. This interaction reflects the aggregated effect of a subset of denoising steps and influences the final image through an indirect mechanism, which should be taken into account when assigning rewards to different timesteps in GRPO, but is not explicitly modeled in prior methods.

In flow matching, the generative denoising process in discrete time can be viewed as a Markov Decision Process-like procedure where the state is the latent representation and the action is a denoising step. Since each updated latent serves as the input to the next step, the effect of early denoising actions propagates and accumulates over time, thereby influencing subsequent updates and the final generated image.

We use Figure~\ref{fig: intuition} to illustrate the implicit interaction. In Figure~\ref{fig: intuition}, each line represents a 10-step SDE-sampling-based denoising trajectory, and the dots at $t=0$ indicate the rewards of the final clean images. To obtain rewards for all intermediate states, for each intermediate latent in a trajectory, we complete the remaining steps using ODE sampling to obtain a corresponding image. The reward of this image reflects the cumulative gain of all preceding SDE steps. Following Section~\ref{preliminary: flow matching}, ODE sampling is deterministic and preserves the same marginal distribution as SDE sampling. For a latent $\vx_t$, we use $\vx_t^{\text{ODE}(k)}$ to denote the latent obtained by running $k$ ODE sampling steps after the existing $(T-t)$ SDE sampling steps. When $k=t$, $\vx_t^{\text{ODE}(k)}$ becomes a fully denoised clean image. Hence, $\vx_t^{\text{ODE}(1)}$ can be viewed as the statistical average of all possible SDE-sampled $\{\vx_{t-1}\}$, except that $\vx_{t-1}$ is obtained via SDE sampling from $t$ to $t-1$, whereas $\vx_t^{\text{ODE}(1)}$ uses ODE sampling for that step.

In Figure~\ref{fig: intuition}, we observe that, for most trajectories, the rewards of intermediate latents oscillate and are non-monotonic. For clarity, we use $R(\vx_t^{(t)})$ to denote $R(\vx_t^{\text{ODE}(t)}, \vc)$. To explain the implicit interaction, we first define the consistency between a single SDE sampling step at $t$ and the overall trajectory as
\begin{equation}\label{eq: s_t}
    s_t = \text{sign}\big(R(\vx_{t-1}^{(t-1)}) - R(\vx_t^{(t)})\big) \cdot \text{sign}\big( R(\vx_0^{(0)}) - R(\vx_T^{(T)}) \big),
\end{equation}
where $R(\vx_0^{(0)})$ denotes the reward of the image sampled using only SDE, and $R(\vx_T^{(T)})$ denotes the reward of the image obtained with only ODE. $s_t$ measures whether the local sampling action at timestep $t$ is aligned with the overall reward trend of the SDE-based trajectory. Based on this, we define the concept of a turning point as follows.
\begin{definition}\label{def: turning point}
    For an SDE-based trajectory $\{\vx_T, \vx_{T-1}, \ldots, \vx_1, \vx_0\}$ with the corresponding intermediate images $\{\vx_{T}^{\text{ODE}(T)}, \vx_{T-1}^{\text{ODE}(T-1)}, \ldots, \vx_{1}^{\text{ODE}(1)}, \vx_{0}^{\text{ODE}(0)}  \}$, a timestep $t$ is a turning point if and only if $s_{t+1} < 0$, $s_t > 0$, $\big(\text{sign}\big(R(\vx_{t-1}^{(t-1)}) - R(\vx_t^{(t)})\big) \cdot \text{sign}\big(R(\vx_0^{(0)}) - R(\vx_{t}^{(t)})\big)\big) > 0$ and $1 \leq t \leq T-1$.
\end{definition}
Intuitively, a turning point is the step at which the local reward trend flips so as to become consistent with the overall trend. Figures~\ref{subfig: non-informative one without flip} and~\ref{subfig: non-informative one violating consistency} show examples that are not treated as turning points, as they violate the conditions of local trend reversal and alignment with the global trend, respectively.


Taking Figure~\ref{subfig: good turning point} as an example, the overall trend of this trajectory improves as more SDE steps are applied. However, the sampling from $\vx_{t+1}$ to $\vx_t$ decrease the reward. The actual change in trend is induced by the sampling at $t$: the step just before it causes degradation, while this step is the one that reverses the trend and makes it consistent with the overall trajectory. Afterward, most subsequent steps improve the reward rather than harming it. In this sense, $t$ corresponds to an implicit interaction that affects subsequent sampling and contributes to the final reward. The benefit of this step is not only immediate but also propagates forward and accumulates through later steps influenced by it. 

In GRPO, such twisting actions that flip the trend deserve more positive (or negative) rewards to reflect their implicit long-term impact, thereby receiving stronger preference (or rejection) during the group normalization in Eq.~\ref{eq: GRPO advantage}. However, to the best of our knowledge, existing methods do not explicitly model this type of implicit interaction. This motivates us to explicitly integrate interaction-aware information into the reward assignment to achieve better performance.



\begin{figure*}[ht]
    \centering
    \includegraphics[width=.9\linewidth]{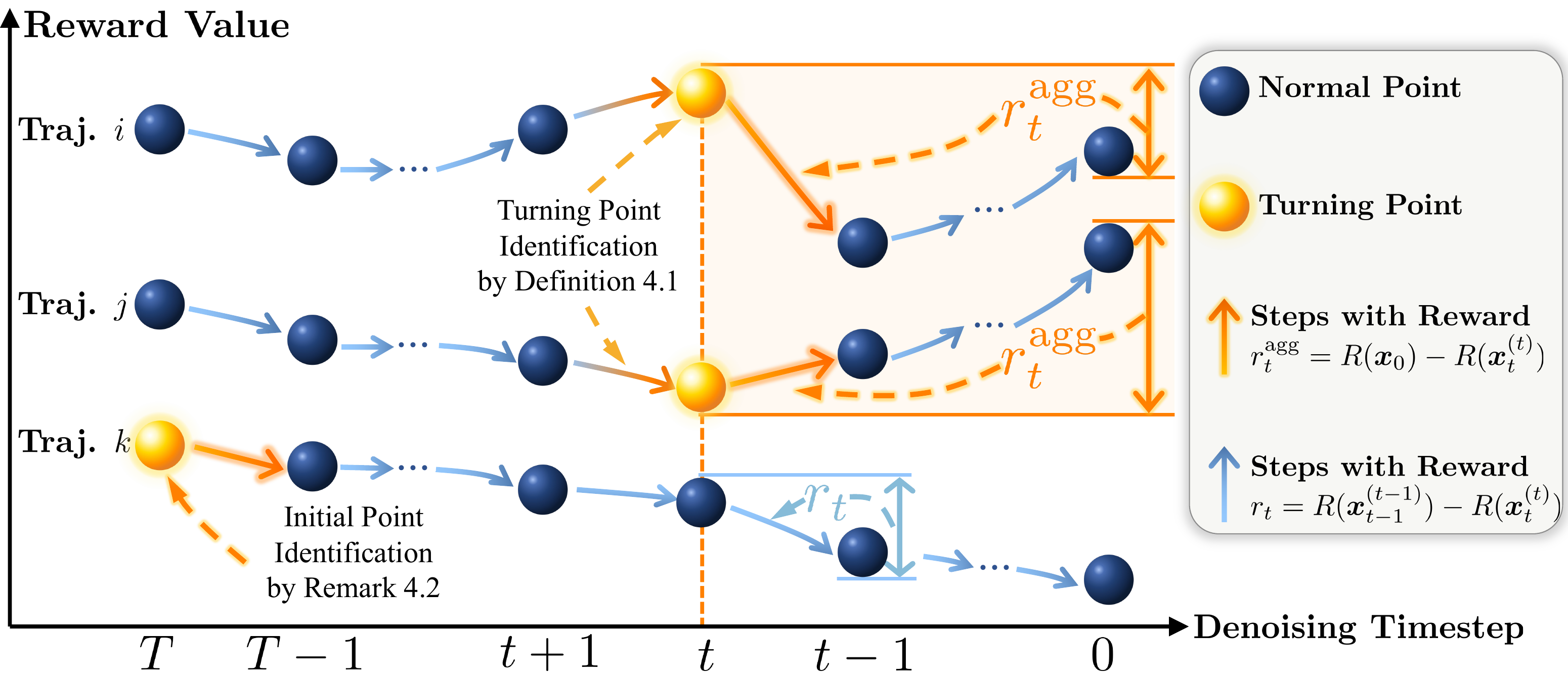}
    \caption{Overview of our method. For each trajectory, we compute stepwise rewards as the pure incremental effect of the current SDE sampling. We then identify the orange turning point that satisfies Definition~\ref{def: turning point} or Remark~\ref{def: constraint on first step}. Next, we assign cumulative rewards to capture their implicit impact on reversing the reward trend. Finally, we apply group normalization independently at each timestep.}
    \label{fig: method}
    \vspace{-15pt}
\end{figure*}

\section{Methodology}\label{section: method}


\subsection{Increment-Based Step-wise Reward Computation}\label{method: stepwise reward computation}

Section~\ref{preliminary: reward sparsity} has introduced the reward sparsity problem in standard GRPO-based FM. The core idea of this family of approaches is to aggregate the contribution of all sampling steps and define the reward solely in terms of the resulting global effect. However, this design is fundamentally misaligned with the need for step-wise feedback, and thus inherently induces reward sparsity. The relative ordering of advantages based on the final image reward is deterministic and may be inconsistent with the true ordering of certain local steps. 
Motivated by recent works \cite{densegrpo, tempflowgrpo}, we instead consider replacing this global effect with the incremental change at each step and using this increment as the step-wise reward.

Building on the properties in Section~\ref{preliminary: turning point induce implicit interaction}, we represent $r_t$, the reward of the sampling action from $t$ to $t-1$, as the difference in gain before and after this step. Specifically, we cache the intermediate latents $\vx_t$ and $\vx_{t-1}$, then apply ODE sampling for $t$ and $t-1$ steps respectively to obtain $\vx_t^{\text{ODE}(t)}$ and $\vx_{t-1}^{\text{ODE}(t-1)}$. Both latents undergo a full $T$-step process, so the reward model can directly evaluate them, and they differ only in the sampling operation at timestep $t$. We then compute the effective increment
\begin{equation}\label{eq: r_t}
    r_{t} = R\big( \vx_{t-1}^{\text{ODE}(t-1)} \big) - R\big( \vx_{t}^{\text{ODE}(t)} \big).
\end{equation}
Note that $R\big( \vx_{t-1}^{\text{ODE}(t-1)} \big)$ and $R\big( \vx_{t}^{\text{ODE}(t)} \big)$ share the same $(T-t)$ SDE sampling; they begin to differ at the $t$-th step. As discussed in Section~\ref{preliminary: flow matching}, compared to SDE sampling, ODE sampling preserves the same marginal distribution and only removes stochasticity. Thus, the ODE-sampled outcome can be viewed as a statistical average and serves as a good proxy for the baseline gain of appending the sampling at this step. By replacing the rewards in Eq.~\ref{eq: GRPO advantage} with $r_t$, we can obtain more accurate step-wise rewards for intermediate sampling steps.


\subsection{Aggregation-Based Implicit Interaction Modeling}\label{method: model implicit interaction}

In Section~\ref{preliminary: turning point induce implicit interaction}, we identify the problem of reward oscillation when deploying SDE sampling at different timesteps. We also show that turning points exert an implicit influence on subsequent sampling, which is not modeled by existing methods. We address this gap with the following design. Specifically, for timesteps that satisfy the definition of turning points, we replace the local stepwise reward with an aggregated multi-step reward, making the final advantage computation implicit-interaction-aware. For an SDE-based trajectory $\{\vx_T, \vx_{T-1}, \ldots, \vx_1, \vx_0\}$ with the corresponding images $\{\vx_{T}^{\text{ODE}(T)}, \vx_{T-1}^{\text{ODE}(T-1)}, \ldots, \vx_{1}^{\text{ODE}(1)}, \vx_{0}\}$, the aggregated reward is defined as
\begin{equation}\label{eq: aggregated reward for turning point}
    r_{t}^\text{agg} = R\big( \vx_{0} \big) - R\big( \vx_{t}^{\text{ODE}(t)} \big),
\end{equation}
where $R\big( \vx_{0} \big)$ denotes the reward of the final image sampled with all SDE. Compared to $r_t$, $r_t^\text{agg}$ captures the cumulative effect from the turning point to the end of denoising. By replacing $r_t$ with $r_t^\text{agg}$ at turning points, we also encode the implicit reward induced by their downstream impact.
We provide two examples in Figure~\ref{fig: method} to illustrate the benefit of using $r_t^\text{agg}$ at turning points. (1) For step $t$ in trajectory $j$, the reward trend switches from decreasing to increasing, yet the local gain is very small, failing to reflect the importance of the denoising action at this step. In contrast, $r_t^\text{agg}$ takes a larger value and more accurately reflects its contribution. (2) While for step $t$ in the trajectory $i$, the local reward exhibits a large drop, while the final reward remains close to the full ODE-based result. In this case, computing the advantage with $r_t$ would overemphasize the effect of this step, whereas using $r_t^\text{agg}$ yields a more moderate contribution.


As explained above, for certain turning points identified in Definition~\ref{def: turning point}, $r_t^\text{agg}$ may have a smaller absolute value than $r_t$. Building on this, we design an alternative strategy that retains only those turning points satisfying $\|r_t^\text{agg}\| > \|r_t\|$. This leads to a stricter notion of turning points.
\begin{definition}\label{def: consistent turning point}
    For an SDE-based trajectory $\{\vx_T, \vx_{T-1}, \ldots, \vx_1, \vx_0\}$ with the corresponding intermediate images $\{\vx_{T}^{\text{ODE}(T)}, \vx_{T-1}^{\text{ODE}(T-1)}, \ldots, \vx_{1}^{\text{ODE}(1)}, \vx_{0}^{\text{ODE}(0)}\}$, a timestep $t$ is a \emph{consistent turning point} if and only if $s_{t+1} < 0$, $s_t > 0$, $\big(\text{sign}\big(R(\vx_{t-1}^{(t-1)}) - R(\vx_t^{(t)})\big) \cdot \text{sign}\big(R(\vx_0^{(0)}) - R(\vx_{t-1}^{(t-1)})\big)\big) > 0$, and $1 \leq t \leq T-1$.
\end{definition}
The steps selected by Definition~\ref{def: consistent turning point} form a subset of those selected by Definition~\ref{def: turning point}. We provide some lemmas and proofs in Appendix~\ref{app: total theoretical analysis}. The benefit of Definition~\ref{def: consistent turning point} is that it filters out ``purer'' implicit interactions, where the direction of the implicit interaction is aligned with the local update.


Since both $r_t$ and $r_t^\text{agg}$ are defined as differences of rewards, \textit{i.e.}, as ``incremental'' quantities, they can be substituted for each other without introducing scale mismatch in the reward values. Moreover, 
the selected turning points may either improve or degrade the reward trend. Thus, we explicitly incorporate both preference-inducing and rejection-inducing operations when computing group advantages, thereby providing more reliable optimization signals for the RL process.


To sum up, the motivation for using $r_t^\text{agg}$ as the reward at turning points is to model and incorporate the implicit long-term impact of key SDE-based denoising actions. With this design, we expect the model to perform sampling while accounting for its potential future impact: it learns to prefer actions whose downstream trajectory is likely to enhance the final reward, and to avoid actions whose later effects would ultimately degrade it.

\begin{table*}[t]
    \centering
    \caption{Results on Compositional Image Generation, Visual Text Rendering, and Human Preference benchmarks, evaluated by task scores on test prompts, and by image quality and preference scores on DrawBench prompts. \textbf{All results are from our own re-implementation under a consistent setup, not directly taken from the Flow-GRPO paper.}
    ImgRwd: ImageReward; UniRwd: UnifiedReward 
    }
    \resizebox{.95\linewidth}{!}{
    \begin{threeparttable}
        \begin{tabular}{lcccccccc}
            \toprule
            \multirow{2}{*}{\textbf{Model}} & \multicolumn{3}{c}{\textbf{Task Metric}} & \multicolumn{2}{c}{\textbf{Image Quality}} & \multicolumn{3}{c}{\textbf{Preference Score}}    \\ 
            \cmidrule(lr){2-4} \cmidrule(lr){5-6} \cmidrule(l){7-9} 
            & \textbf{GenEval}  & \textbf{OCR Acc.}  & \textbf{PickScore} & \textbf{Aesthetic}          & \textbf{DeQA}         & \textbf{ImgRwd} & \textbf{PickScore} & \textbf{UniRwd} \\ 
            \midrule
            SD3.5-M & 0.6029 & 0.4548 & 21.44 & 5.380 & 3.829 & 0.6039 & 21.95 & 3.200 \\
            \midrule
            \multicolumn{9}{c}{\textit{Compositional Image Generation}} \\
            \midrule
            Flow-GRPO & 0.9673 & — & — & 5.223 & 3.861 & 0.8792 & 22.20 & 3.459 \\
            TempFlow-GRPO \tnote{$\ddagger$} & 0.9703 & — & — & 5.075 & 3.403 & 0.8574 & 21.51 & 3.201 \\
            \rowcolor{gray!20} TP-GRPO (w/o constraint)  & 0.9714 & — & — & 5.352 & 3.963 & 0.9267 & 22.28 & 3.491 \\
            \rowcolor{gray!20} TP-GRPO (w constraint)  & 0.9725 & — & — & 5.246 & 3.977 & 0.8710 & 22.20 & 3.488 \\
            
            \midrule
            \multicolumn{9}{c}{\textit{Visual Text Rendering}} \\
            \midrule
            Flow-GRPO & — & 0.9579 & — & 5.091 & 3.459 & 0.6906 & 21.91 & 3.088 \\
            TempFlow-GRPO \tnote{$\ddagger$} & — & 0.9693 & — & 5.086 & 2.762 & 0.6316 & 21.35 & 3.017 \\
            \rowcolor{gray!20} TP-GRPO (w/o constraint)  & — & 0.9718 & — & 5.092 & 3.251 & 0.6433 & 21.93 & 3.160 \\
            \rowcolor{gray!20} TP-GRPO (w constraint)  & — & 0.9651 & — & 5.111 & 3.480 & 0.7378 & 22.06 & 3.260 \\
            
            \midrule
            \multicolumn{9}{c}{\textit{Human Preference Alignment}} \\
            \midrule
            Flow-GRPO & — & — & 24.02 & 6.231 & 3.966 & 1.3875 & 24.10 & 3.605 \\
            TempFlow-GRPO \tnote{$\ddagger$} & — & — & 24.45 & 6.300 & 3.855 & 1.3799 & 24.25 & 3.532 \\
            \rowcolor{gray!20} TP-GRPO (w/o constraint)  & — & — & 24.73 & 6.293 & 3.961 & 1.3714 & 24.46 & 3.600 \\
            \rowcolor{gray!20} TP-GRPO (w constraint)  & — & — & 24.67 & 6.321 & 3.993 & 1.4419 & 24.61 & 3.640 \\
            \bottomrule
            \end{tabular}
        
        \begin{tablenotes}
        \item[$\ddagger$] 
        We restrict TempFlow-GRPO to its branching mechanism, excluding orthogonal enhancements to isolate the effect of process reward. The reported results are based on our implementation under its default settings, aligned with our experimental setup.
        \end{tablenotes}
    \end{threeparttable}
}
\label{table: main table}
\end{table*}

\subsection{Implicit Long-Term Effect Modeling for the Initial Sampling Step}\label{method: further constraint for optimization}

In this subsection, we propose a practical scheme for selecting the initial sampling step to model its implicit long-term influence. This design is motivated by the observation that, under our definition of turning points, the first denoising step is naturally excluded from aggregation-based effect propagation. The proposed constraint closes this gap and yields a more comprehensive treatment of long-term effects.

Recall that our turning-point criterion relies on the auxiliary judgment based on the last denoising action before the current state ($s_{t+1} < 0$). Directly applying it omits the first denoising step ($t \leq T-1$). Consequently, all initial sampling steps at $t = T$ in each trajectory receive only the local reward $r_t$ and cannot benefit from the aggregated reward $r_t^\text{agg}$. However, some early steps can strongly influence the overall trajectory and their implicit effects, if modeled, can further benefit the RL process. 
To address this, we add a constraint that extends long-term effect detection to the first step. Concretely, we identify the initial step as eligible for using $r_t^\text{agg}$ as follows.
\begin{remark}\label{def: constraint on first step}
    The sampling action at the first denoising step is selected if and only if $\big(\text{sign}\big(R(\vx_{T-1}^{(T-1)}) - R(\vx_T^{(T)})\big) \cdot \text{sign}\big(R(\vx_0^{(0)}) - R(\vx_T^{(T)})\big)\big) > 0$.
\end{remark}
This condition selects early steps whose local reward change aligns with the overall trend, thereby capturing the impact of early decisions. In this case, initial action that strongly steers the trajectory can be assigned a more representative cumulative reward, better leveraging these indicative points.

\begin{figure*}[h]
    \centering
    \begin{subfigure}{0.32\textwidth}
        \centering
        \includegraphics[width=\linewidth]{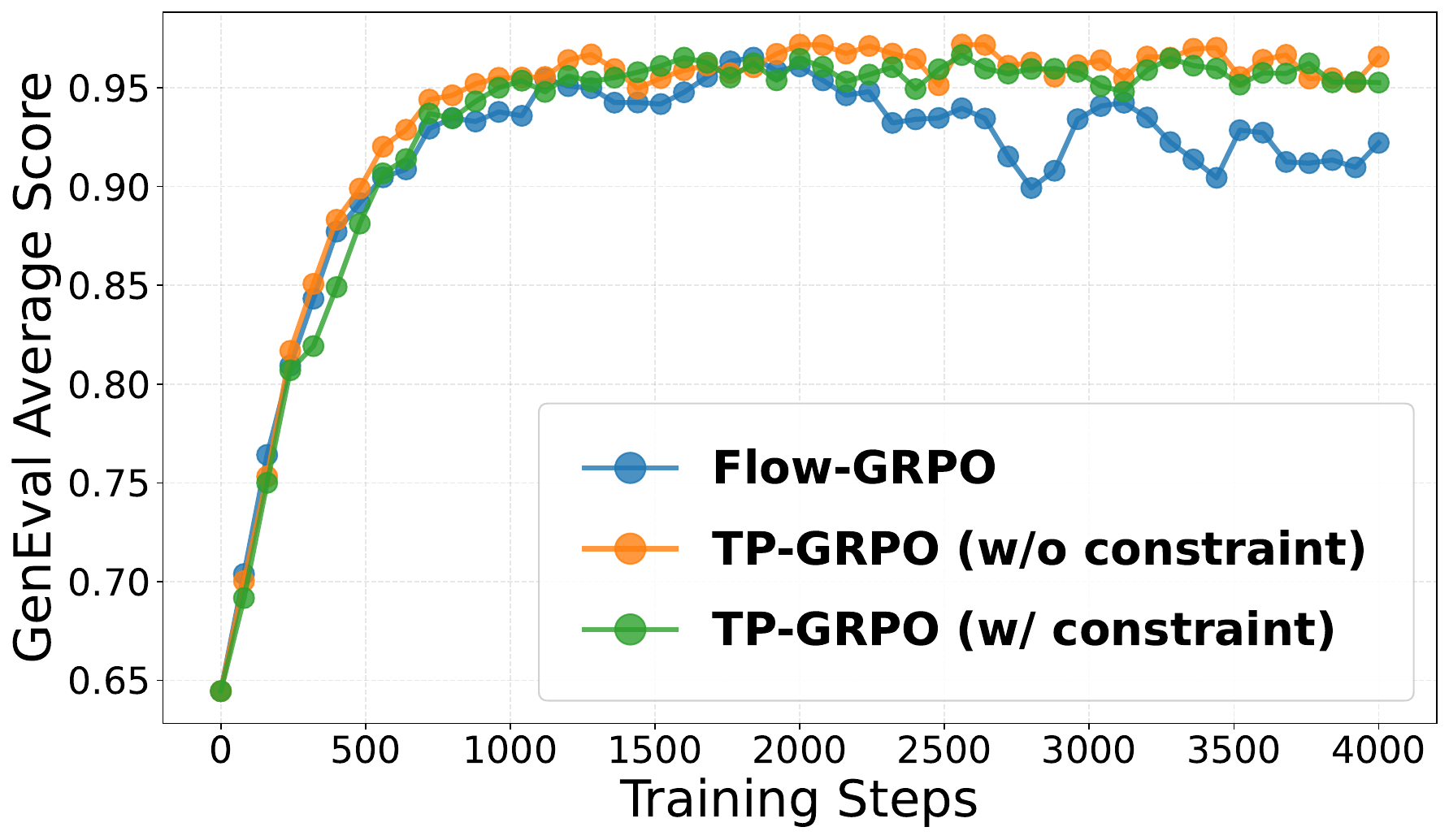}
        \caption{Compositional Image Generation}
        \label{subfig: geneval curve}
    \end{subfigure}
    \hfill
    \begin{subfigure}{0.32\textwidth}
        \centering
        \includegraphics[width=\linewidth]{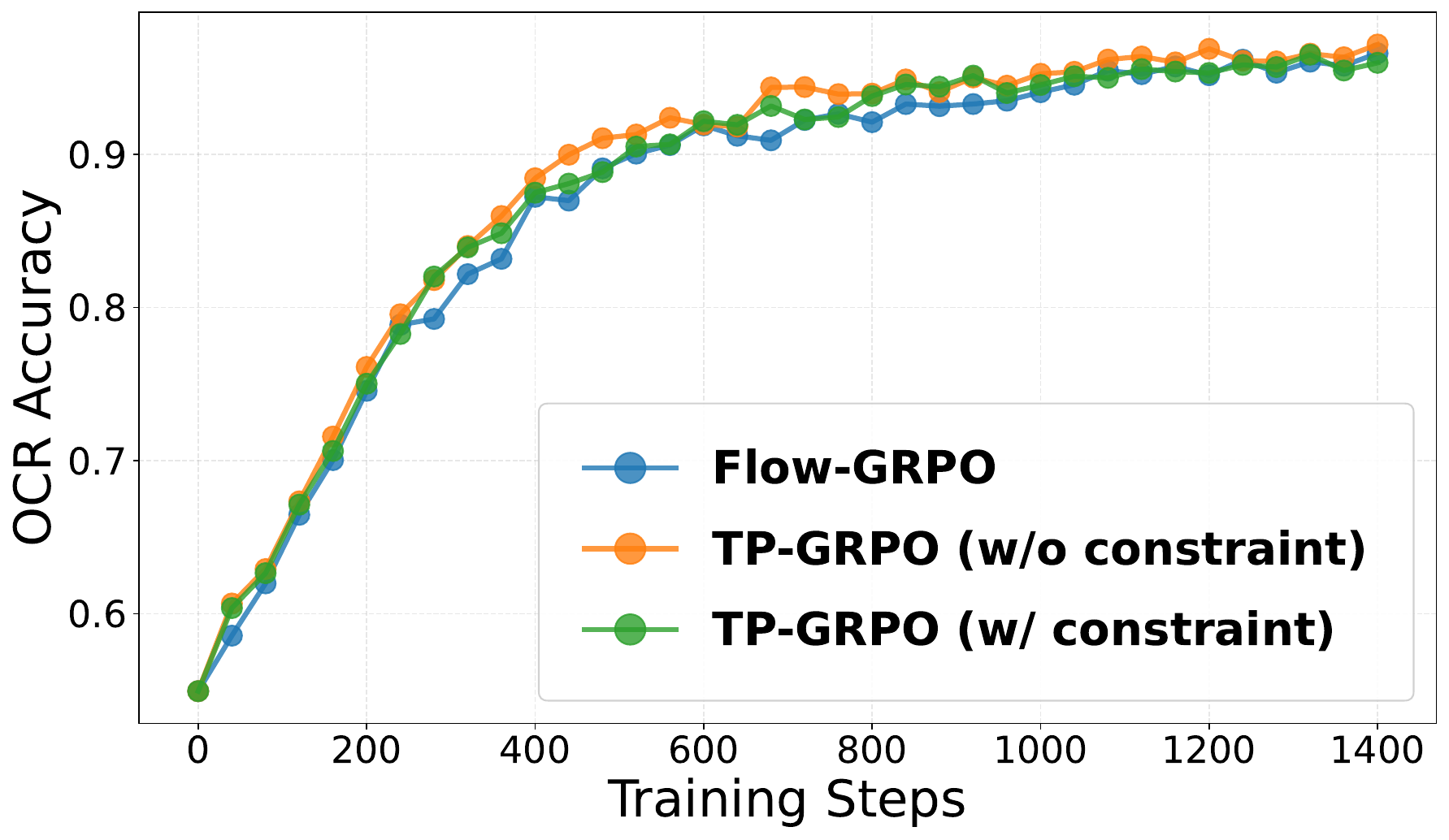}
        \caption{Visual Text Rendering}
        \label{subfig: ocr curve}
    \end{subfigure}
    \hfill
    \begin{subfigure}{0.32\textwidth}
        \centering
        \includegraphics[width=\linewidth]{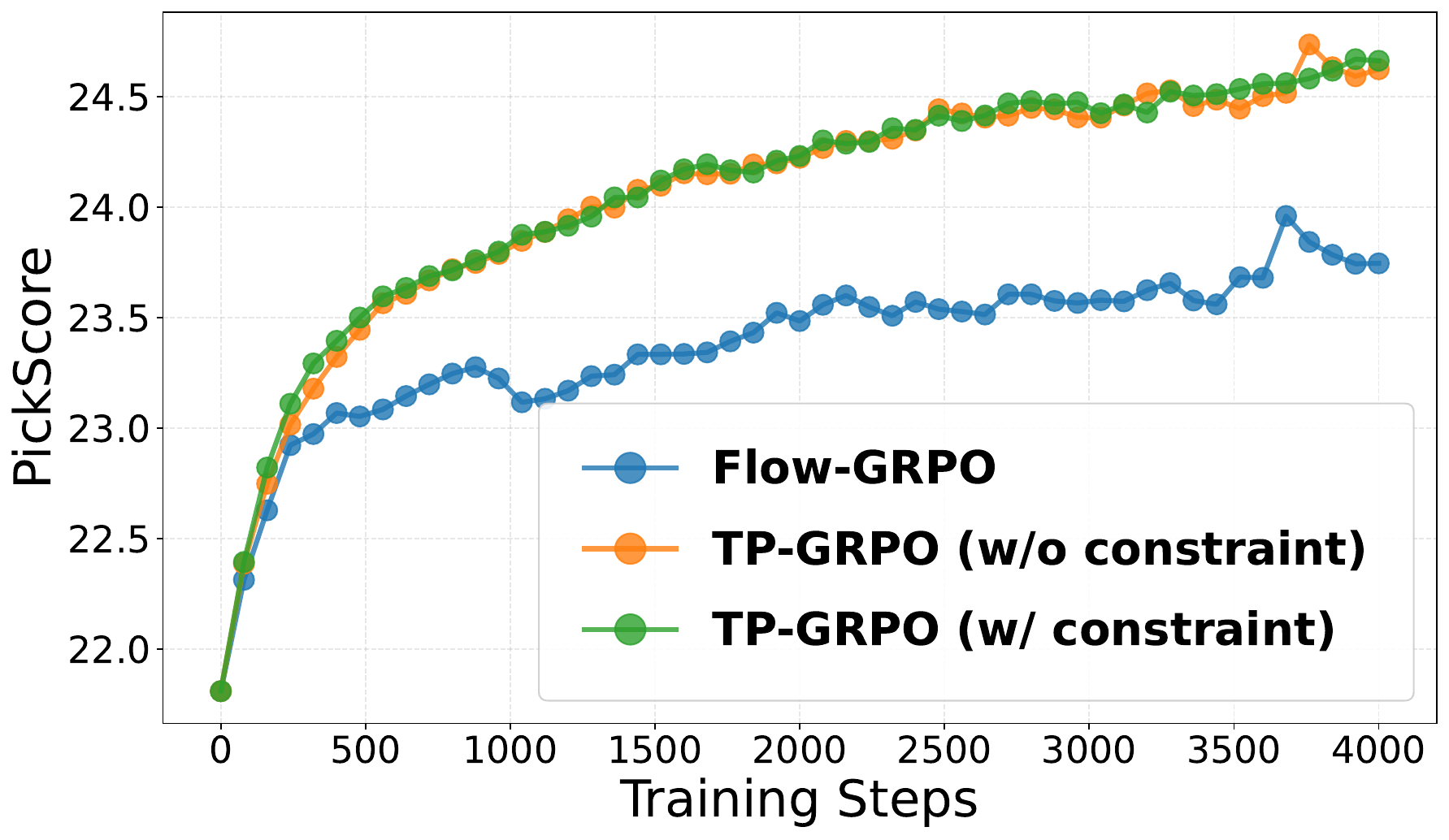}
        \caption{Human Preference Alignment}
        \label{subfig: pickscore cure}
    \end{subfigure}

    \caption{Training curves on three evaluation tasks. The two TP-GRPO variants differ in whether they apply the consistency constraint in Definition~\ref{def: consistent turning point}.
    }
    \label{fig: main 3 curves}
    \vspace{-15pt}
\end{figure*}


\section{Experiments}
\subsection{Experimental Setting}


Our experimental setup follows Flow-GRPO. We train models on three tasks: (1) \emph{Compositional Image Generation} with Geneval~\cite{geneval} rewards; (2) \emph{Human Preference Alignment} with PickScore~\cite{pickscore} rewards; and (3) \emph{Visual Text Rendering} with a rule-based OCR accuracy reward~\cite{ocr, gong2025seedream}. We adopt SD3.5-M~\cite{sd35} as the base model and apply LoRA~\cite{hu2022lora} for efficient fine-tuning. Key hyperparameters are aligned with Flow-GRPO: training with sampling timesteps $T = 10$, inference with $T = 40$, group size $G = 24$, and image resolution as $512$. All results reported in our tables and figures are obtained from our own re-implementation under a controlled and consistent configuration on our hardware, rather than directly copied from the original Flow-GRPO paper, ensuring a fair comparison. More experimental details are provided in Appendix~\ref{app: experiment config}.



\subsection{Main Results}


We present the quantitative comparison results in Table~\ref{table: main table}. 
The two TP-GRPO variants differ in whether they use Definition~\ref{def: turning point} or Definition~\ref{def: consistent turning point} for turning-point selection.
With the step-level reward computation in Section~\ref{method: stepwise reward computation} and turning-point detection in Section~\ref{method: model implicit interaction}, TP-GRPO consistently outperforms Flow-GRPO across all three tasks (columns 2--4), while largely preserving generalization performance (columns 5--9).
The qualitative comparison is provided in Figure~\ref{fig: illustration comparison}.
We observe that both variants of our proposed TP-GRPO improve generation quality by producing more accurate counts, better text rendering, and enhanced aesthetics and content alignment.
We also find no indication of reward hacking in the generated images.
For more qualitative results, please refer to Appendix~\ref{app: more qualititive results}.

In addition to the best-evaluation results reported in the table, we present the training curves in Figure~\ref{fig: main 3 curves}. These curves are obtained from experiments where we remove the KL penalty in Eq.~\ref{eq: GRPO computation} to assess the exploratory capability of our method. In this unconstrained setting, TP-GRPO again achieves superior performance on all three tasks. The gain is especially notable on PickScore, where the unbounded, non–rule-based reward allows our method to better exploit the optimization signal. Our checkpoint at step $\sim$700 attains a reward comparable to Flow-GRPO at step $\sim$2300, demonstrating faster convergence and higher performance.




\subsection{Further Analysis}


We further analyze the key components that affect our method’s optimization behavior in GRPO. In this section, we remove the KL penalty term in Eq.~\ref{eq: GRPO computation} during training. This setup isolates how the examined factors shape the \emph{pure} policy optimization dynamics with respect to the reward signal, without confounding effects from regularization.


\begin{figure*}[h]
    \centering
    \includegraphics[width=.95\linewidth]{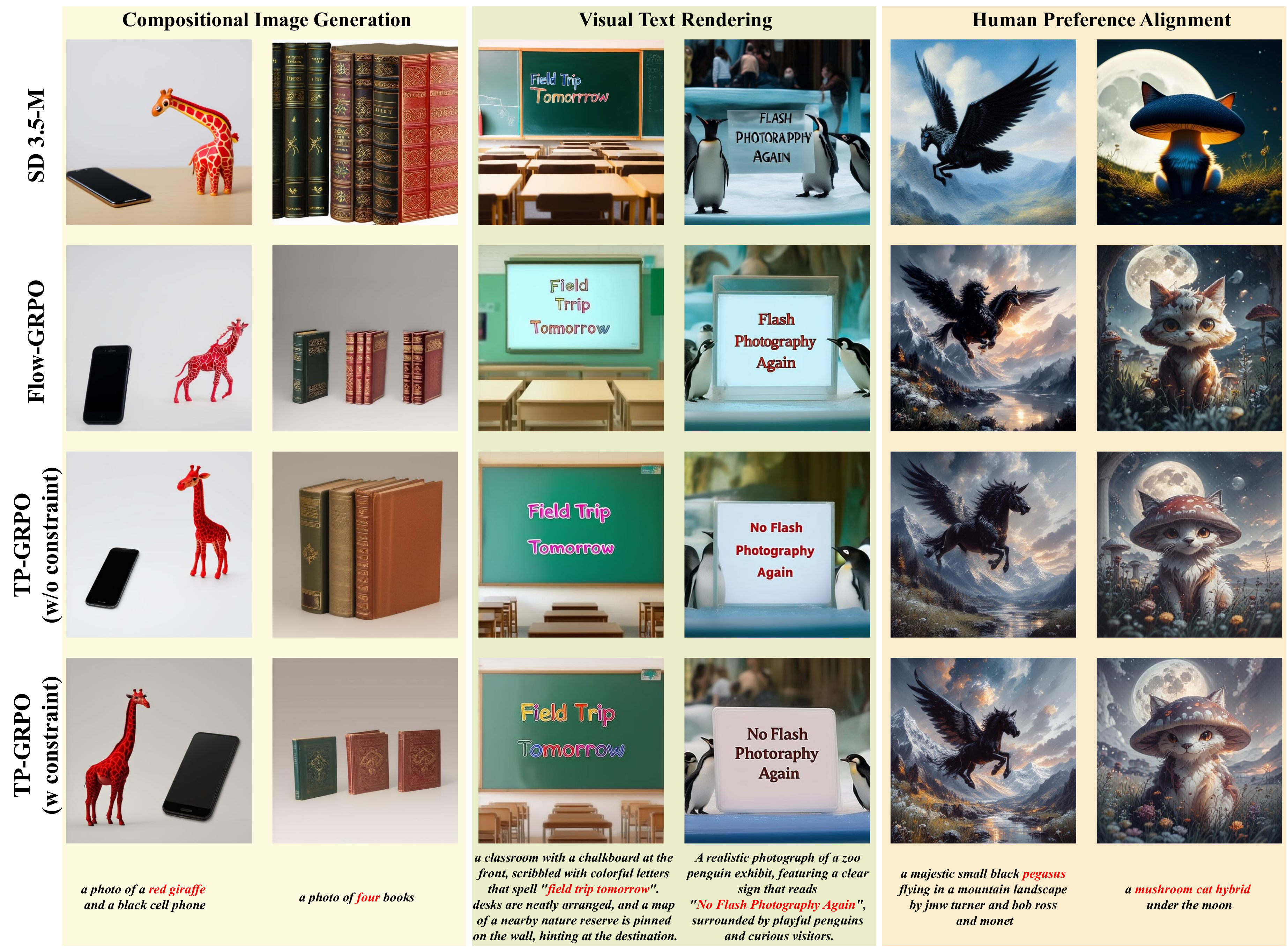}
    \caption{Qualitative comparison across three tasks. Compositional Image Generation, Visual Text Rendering, and Human Preference Alignment, respectively, assess color/counting, text rendering, and content alignment (including aesthetics).}
    \label{fig: illustration comparison}
    \vspace{-10pt}
\end{figure*}


\begin{figure}[h]
    \centering
    \includegraphics[width=.85\linewidth]{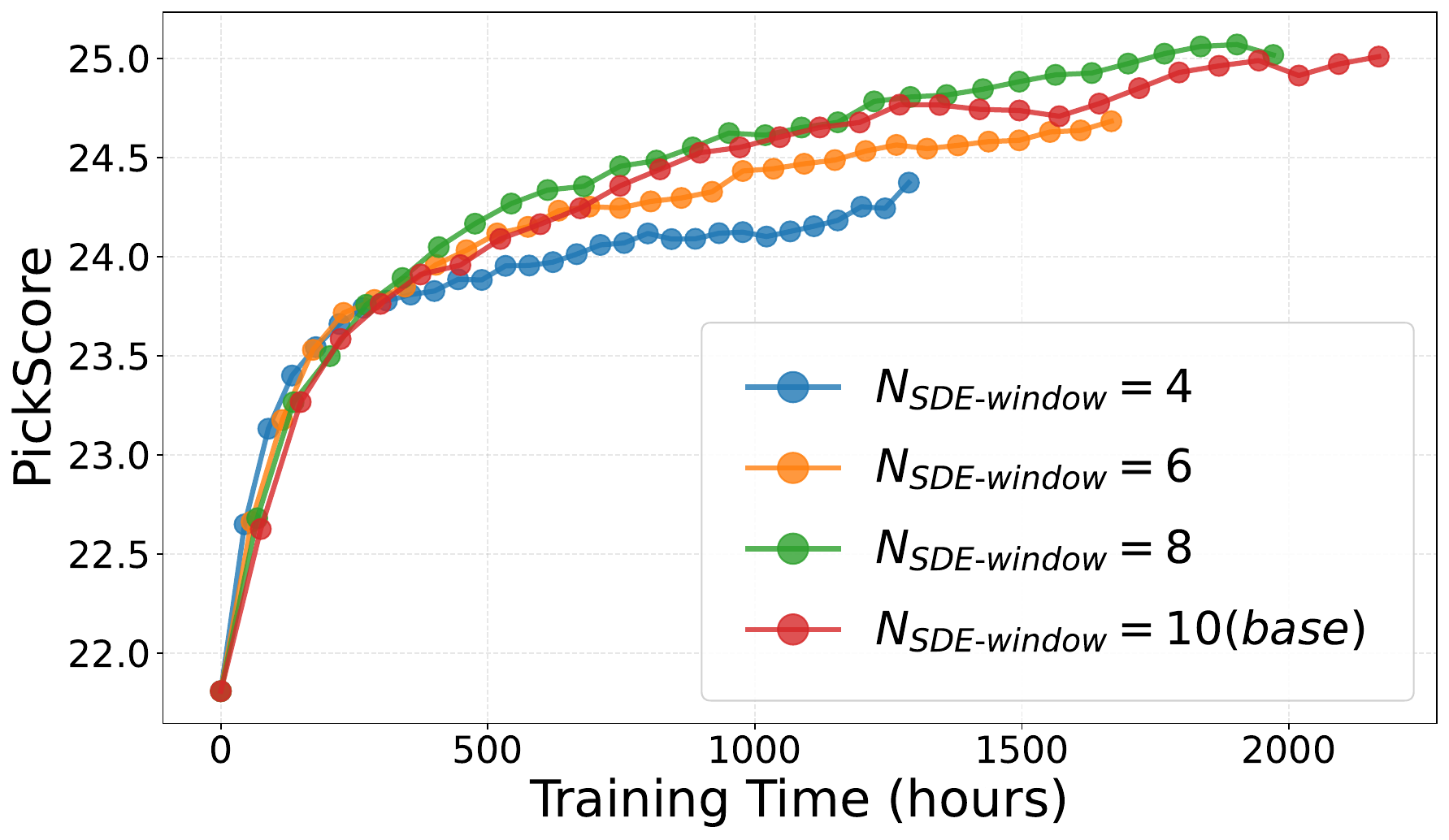}
    \caption{Comparison across different numbers of SDE-sampling steps. Performance is reported over 2400 training steps with corresponding training time.}
    \label{fig: ablation on window size}
    \vspace{-10pt}
\end{figure}

\begin{figure}[h]
    \centering
    \includegraphics[width=.85\linewidth]{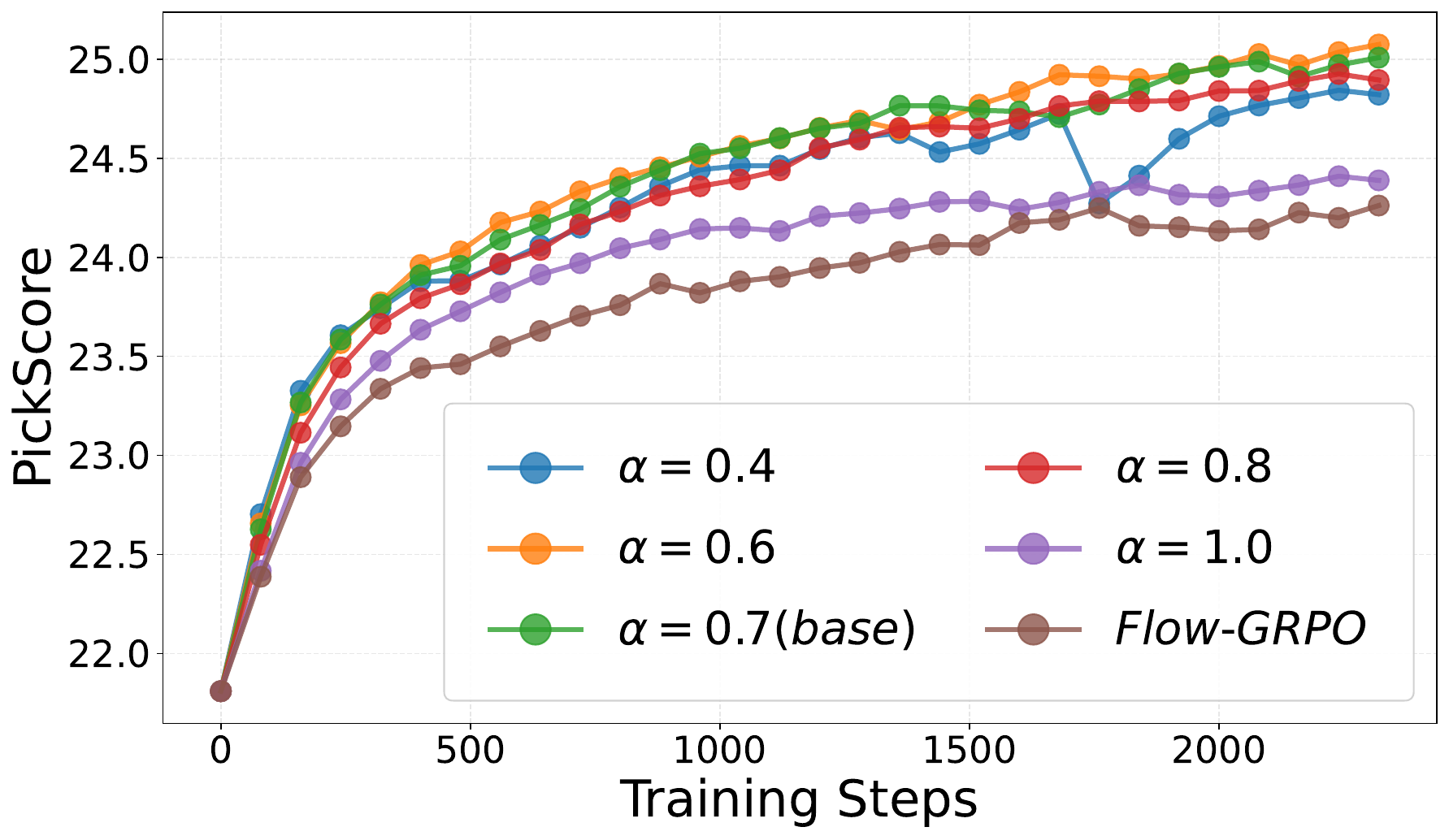}
    \caption{Comparison across different noise level $\alpha$. Larger $\alpha$ indicates more stochastic SDE sampling. Within a suitable range, our method consistently surpasses standard Flow-GRPO.}
    \label{fig: ablation on alpha}
    \vspace{-10pt}
\end{figure}


\begin{figure}[h]
    \centering
    \includegraphics[width=.85\linewidth]{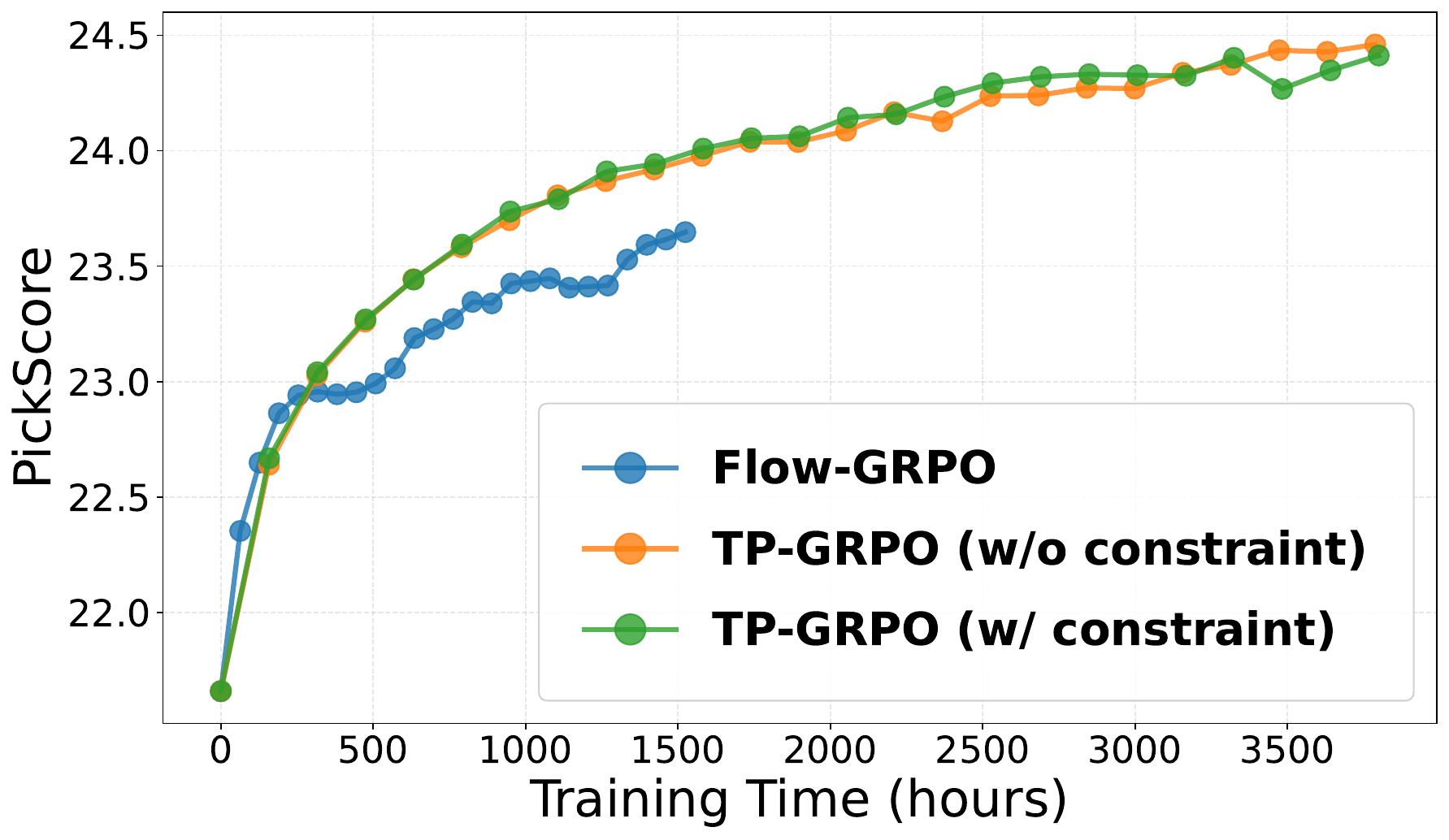}
    \caption{Wall-clock time comparison given the same steps.}
    \label{fig: training time vs same step}
    \vspace{-10pt}
\end{figure}

\textbf{Window Size in SDE sampling.} 
Because we assign rewards at each timestep, the number of steps on which we apply SDE sampling (\textit{i.e.}, the sampling-window size during training) directly affects how the model updates. Figure~\ref{fig: ablation on window size} shows the performance over 2400 training steps as we vary this window. The red curve corresponds to the default configuration, which applies SDE sampling to all steps\footnote{Flow-GRPO default does not apply SDE at the final step.}. Reducing the window size lowers the cost of intermediate sampling and thus shortens training time for a fixed number of epochs. Interestingly, moderately shrinking the window (\textit{e.g.}, to a size of 8) also improves performance. This is consistent with the fact that the final image is largely determined by earlier denoising steps, while the last 1--2 steps have negligible influence and rarely contain turning points. However, when the window is reduced too aggressively (e.g., to 4 steps), performance drops sharply. We attribute this degradation to skipping optimization on later steps and consequently failing to capture turning points that occur in those steps.


\textbf{Noise-Scale Choice in the SDE Sampler.} 
The coefficient $\alpha$ in Eq.~\ref{eq: SDE sampling} scales the stochastic term in the SDE sampler and thereby controls the variability of the denoising trajectory. Larger $\alpha$ produces more diverse intermediate states. Flow-GRPO uses $\alpha = 0.7$ as its default, and we take this value as the reference in Figure~\ref{fig: ablation on alpha}, where we report our results under different $\alpha$. Small deviations around $0.7$ do not substantially change the learning trend. When $\alpha$ is too small (\textit{e.g.}, $0.4$), the stochasticity becomes insufficient: the curve exhibits oscillation around $1750$ steps and stays mostly below that of $\alpha = 0.7$. When $\alpha$ is too large (\textit{e.g.}, $1.0$), the intermediate latents become overly diverse, the optimization direction becomes unstable, and performance clearly degrades. These results suggest that GRPO requires a balanced level of stochasticity: both too little and too much noise harm learning. Across all tested values, however, our method consistently outperforms the Flow-GRPO baseline, indicating robustness to this hyperparameter.

\textbf{Comparison of Performance under the Same Wall-Clock Time.}
Our TP-GRPO requires additional sampling to provide the fine-grained signals needed for effective RL optimization. To evaluate whether this overhead is justified, we compare its performance with Flow-GRPO under the same wall-clock training time on the human preference alignment task. As shown in Figure~\ref{fig: training time vs same step}, although our method incurs higher time cost per training step, it still consistently outperforms Flow-GRPO when compared at equal wall-clock time. These results indicate that the additional sampling overhead is well justified by the resulting performance gains.

\section{Conclusion}


In this paper, we identified two limitations of existing Flow-based GRPO methods. First, terminal rewards are uniformly propagated to all denoising steps. This causes step-wise reward sparsity and local misalignment. Second, group-wise ranking compares trajectories only at matched timesteps. It ignores within-trajectory dependencies and delayed effects.  To address these issues, we proposed TurningPoint-GRPO (TP-GRPO). TP-GRPO uses step-level incremental rewards, which accurately models each denoising action’s local effect. Our method also identifies turning points, which are steps that flip the local reward trend. We assign these turning-point actions an aggregated long-term reward to capture their delayed impact. Turning points are detected via sign changes of incremental rewards. This keeps the method efficient and hyperparameter-free. Extensive experiments show that TP-GRPO could effectively  improve generations.

\section*{Acknowledgements}
This research was supported in part by Zhejiang Provincial Natural Science Foundation of China under Grant LD24F020016, CASIC Guangxin Intelligent Technology, Key R\&D Program of Ningbo under Grant 2025Z128, and Alibaba Group through Alibaba Research Intern Program.

We would also like to thank all anonymous reviewers for their insightful comments and helpful suggestions, which have significantly improved the quality of this work.


\section*{Impact Statement}

This paper presents work whose goal is to advance the field of Machine
Learning. There are many potential societal consequences of our work, none which we feel must be specifically highlighted here.

\bibliography{example_paper}
\bibliographystyle{icml2026}

\newpage
\appendix
\onecolumn

\noindent The \textbf{Appendix} is structured as follows:
\begin{itemize}
    \item Appendix~\ref{appendix: flux} presents the experimental results on FLUX.1-dev.
    \item Appendix~\ref{app: experiment config} provides the details of our experimental configuration.
    \item Appendix~\ref{app: total theoretical analysis} provides some theoretical discussion on our sign-based turning-point criteria, including the guarantee of sign consistency as well as the value range of rewards.
    \item Appendix~\ref{app: balancing operations} details our balancing operations on aggregation-based rewards to further stabilize and improve optimization.
    \item Appendix~\ref{app: pseudocode} offers the pseudocode of our TP-GRPO.
    \item Appendix~\ref{app: more analysis on turning points} presents more analysis on turning points.
    \item Appendix~\ref{app: more qualititive results} provides more qualitative comparisons to demonstrate the superiority of our method.

\end{itemize}

\section{Experimental Results on FLUX.1-dev}\label{appendix: flux}

To further verify the robustness of our method across different architectures, we adopt FLUX.1-dev \cite{flux} as the base model and use PickScore \cite{pickscore} as the reward model. Following the Flow-GRPO configuration, we set $\alpha$ to $0.8$, use $T=6$ sampling steps during training and $T=28$ during inference, and choose a group size of $G=24$. The classifier-free guidance scale is fixed to $3.5$. The corresponding training curves are shown in Figure~\ref{fig: flux curve}. Our method consistently outperforms Flow-GRPO under this setup, demonstrating the effectiveness of the proposed step-level reward design and our mechanism for capturing turning points.


\begin{wrapfigure}{r}{0.5\textwidth}
  \centering
  \includegraphics[width=0.5\textwidth]{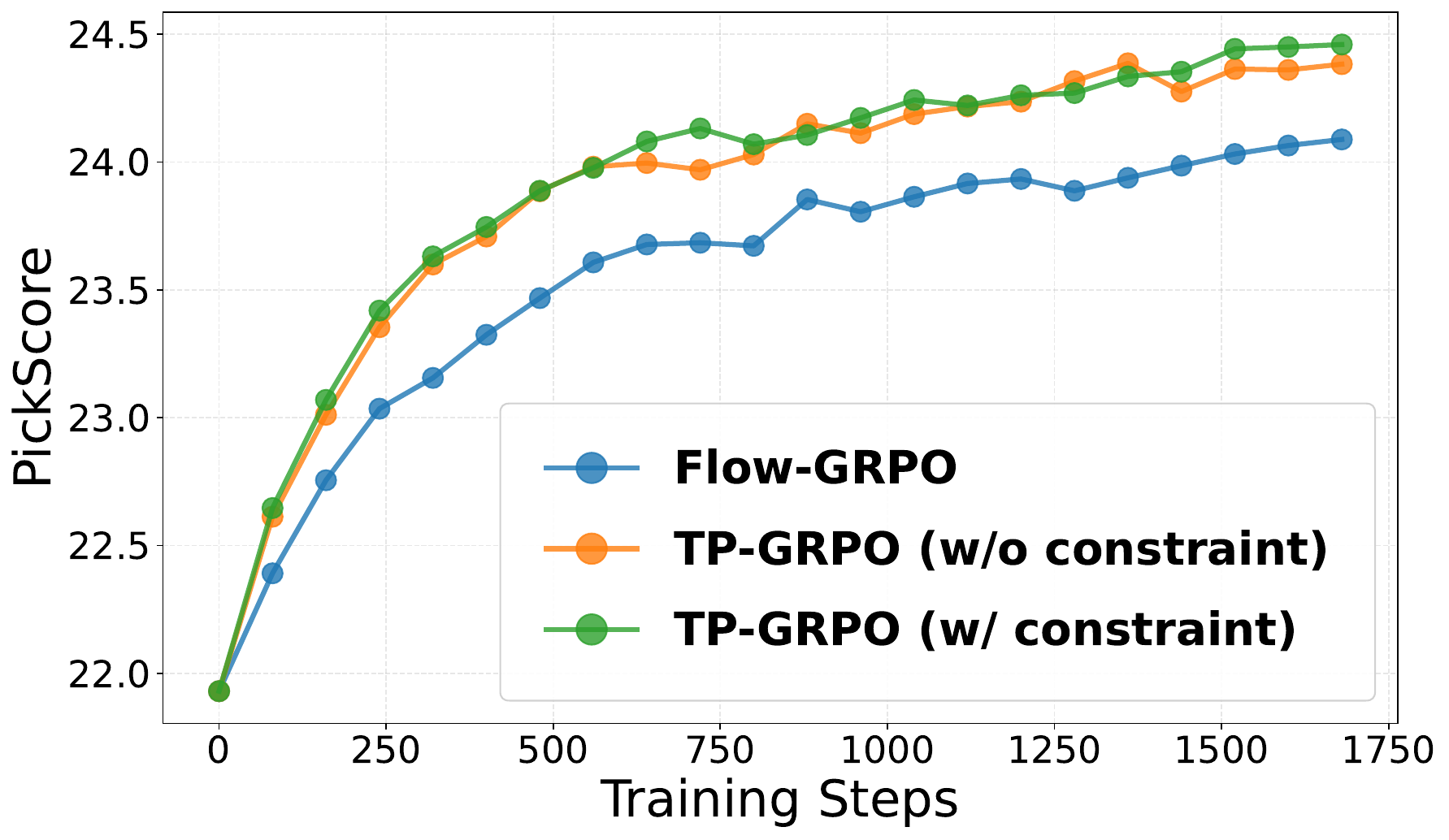}
  \caption{Training curves with FLUX.1-dev as base model}
   \label{fig: flux curve}
\end{wrapfigure}

\section{Details of the Experimental Configuration}\label{app: experiment config}
Our experiments are conducted based on the Flow-GRPO codebase \cite{flow-grpo}. We train all models using 32 NVIDIA H20 GPUs. To maximize performance, we compute advantages on a per-prompt basis\footnote{\url{https://github.com/yifan123/flow_grpo/issues/52}}. Following the practice of DanceGRPO \cite{dance-grpo}, we set the KL penalty coefficient $\beta$ to balance fast convergence and the prevention of reward hacking. Specifically, we use $\beta=0.0004$ for Compositional Image Generation and Visual Text Rendering, and $\beta = 0.0001$ for Human Preference Alignment. 
For both training and evaluation, in some cases we adopt newer reward model checkpoints that exhibit stronger performance than the defaults used in Flow-GRPO. The exact reward models and their versions are summarized in Table~\ref{table: reward model}.

\begin{table}[h]
\centering
\caption{Used reward models and their links.}
\resizebox{.95\linewidth}{!}{
    \begin{tabular}{ll}
    \toprule
    \textbf{Models} & \textbf{Links} \\
    \midrule
    \texttt{Aesthetic Score}~\cite{clip} & \url{https://github.com/LAION-AI/aesthetic-predictor} \\
    \texttt{PickScore}~\citep{pickscore} & \url{https://huggingface.co/yuvalkirstain/PickScore_v1} \\
    \texttt{DeQA Score}~\citep{deqa} & \url{https://huggingface.co/zhiyuanyou/DeQA-Score-Mix3} \\
    \texttt{ImageReward}~\citep{imagereward} & \url{https://huggingface.co/THUDM/ImageReward} \\
    \texttt{UnifiedReward}~\citep{unifiedreward} & \url{https://huggingface.co/CodeGoat24/UnifiedReward-qwen-7b} \\
    \texttt{OCR Accuracy}~\citep{cui2025paddleocr} & \url{https://github.com/PaddlePaddle/PaddleOCR/tree/release/3.2} \\
    \texttt{GenEval Score}~\citep{geneval} & \url{https://mmdetection.readthedocs.io/en/v2.28.2/} \\
    \bottomrule
    \end{tabular}
}
\label{table: reward model}
\end{table}

\section{Theoretical Analysis of Sign-based Turning-Point Criteria}\label{app: total theoretical analysis}
\subsection{Sign Consistency of Rewards Under Definition~\ref{def: turning point}}\label{appsub: sign consistency of def 4.1}

\begin{lemma}
    For any turning point selected by Definition~\ref{def: turning point}, the sign of its local reward and aggregated long-term reward is the same, 
    \textit{i.e.}, $r_t \cdot r_t^\text{agg} > 0$.
\end{lemma}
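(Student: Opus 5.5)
The argument reads the sign of both rewards off the third condition of Definition~\ref{def: turning point}; no magnitude estimates are needed. Let $t$ be a turning point and abbreviate $R_k := R(\vx_k^{(k)})$. By Eq.~\ref{eq: r_t}, $r_t = R_{t-1} - R_t$. By Eq.~\ref{eq: aggregated reward for turning point}, $r_t^\text{agg} = R(\vx_0) - R_t = R_0 - R_t$, since $\vx_0^{\text{ODE}(0)} = \vx_0$ is the all-SDE image. Hence
\begin{equation*}
\text{sign}(r_t)\cdot\text{sign}(r_t^\text{agg}) = \text{sign}\big(R_{t-1}-R_t\big)\cdot\text{sign}\big(R_0-R_t\big).
\end{equation*}

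The third condition of Definition~\ref{def: turning point} says exactly that this product is strictly positive. Strict positivity of a product of two signs rules out either factor being zero, so both signs are nonzero and equal. Therefore $r_t$ and $r_t^\text{agg}$ are nonzero with the same sign, and $r_t \cdot r_t^\text{agg} > 0$.

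The conditions $s_{t+1}<0$ and $s_t>0$ are not needed for this claim. They only fix the direction of the sign relative to the global trend $R_0 - R_T$. I will note this, since it makes clear that the lemma holds for any step satisfying the third condition, including the first step selected under Remark~\ref{def: constraint on first step}.

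The only point needing care is the notation. One must check that the $R(\vx_0^{(0)})$ appearing in the definition is the same quantity as $R(\vx_0)$ in $r_t^\text{agg}$, and that $r_t$ uses the same ODE-completed images as $s_t$. Once the terms are identified this way, the argument is immediate.
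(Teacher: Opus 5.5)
Your proof is correct and follows the paper's own argument: you identify $\mathrm{sign}(r_t)\,\mathrm{sign}(r_t^\text{agg})$ with the third condition of Definition~\ref{def: turning point}, and strict positivity of that product forces both signs to be nonzero and equal. Your two extra observations are also accurate: the conditions $s_{t+1}<0$ and $s_t>0$ are not needed, and the same argument covers the first step selected under Remark~\ref{def: constraint on first step}.
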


\begin{proof}
    By definition,
    \[
        r_{t} = R\big( \vx_{t-1}^{\text{ODE}(t-1)} \big) - R\big( \vx_{t}^{\text{ODE}(t)} \big),
        \qquad
        r_{t}^\text{agg} = R\big( \vx_{0} \big) - R\big( \vx_{t}^{\text{ODE}(t)} \big).
    \]

    Definition~\ref{def: turning point} states that $t$ is a turning point only if
    \[
        \mathrm{sign}\Big(R(\vx_{t-1}^{(t-1)}) - R(\vx_t^{(t)})\Big)
        \cdot
        \mathrm{sign}\Big(R(\vx_0^{(0)}) - R(\vx_{t}^{(t)})\Big) > 0.
    \]
    Identifying the terms, we see that
    \[
        \mathrm{sign}\big(r_t\big)\,\mathrm{sign}\big(r_t^\text{agg}\big) > 0.
    \]
    Therefore, $r_t$ and $r_t^\text{agg}$ have the same nonzero sign, which implies
    \[
        r_t \cdot r_t^\text{agg} > 0.
    \]
\end{proof}

\subsection{Sign Consistency and Magnitude of Rewards Under Definition~\ref{def: consistent turning point}}\label{appsub: sign consistency and absolute value of def 5.1}

\subsubsection{Sign Consistency of Local and Aggregated Rewards}\label{appsubsub: sign consistency of def 5.1}

\begin{lemma}\label{lemma: sign consistent by def 5.1}
    For any turning point selected by Definition~\ref{def: consistent turning point}, 
    the sign of its local reward and aggregated long-term reward is the same, 
    \textit{i.e.}, $r_t \cdot r_t^\text{agg} > 0$.
\end{lemma}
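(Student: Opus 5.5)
Write $a = R(\vx_t^{(t)})$, $b = R(\vx_{t-1}^{(t-1)})$ and $c = R(\vx_0^{(0)}) = R(\vx_0)$. Then $r_t = b - a$ and $r_t^\text{agg} = c - a$. The third condition of Definition~\ref{def: consistent turning point} reads $\mathrm{sign}(b-a)\,\mathrm{sign}(c-b) > 0$. Unlike Definition~\ref{def: turning point}, it compares $c$ with $b$ rather than with $a$, so the previous lemma cannot be reused verbatim. The plan is to show that this condition forces $c - a$ to share the sign of $b - a$, using a telescoping decomposition.

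First, since the product of the two signs is strictly positive, both factors are nonzero and equal. So $b - a$ and $c - b$ are nonzero reals of the same sign. Second, decompose
\[
r_t^\text{agg} = c - a = (c - b) + (b - a).
\]
A sum of two nonzero reals of the same sign is nonzero and carries that common sign. Hence $\mathrm{sign}(r_t^\text{agg}) = \mathrm{sign}(b-a) = \mathrm{sign}(r_t) \neq 0$, which gives $r_t \cdot r_t^\text{agg} > 0$.

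The conditions $s_{t+1} < 0$ and $s_t > 0$ are not needed here; only the consistency condition matters. There is no real obstacle. The one point that needs care is the split into the cases $b > a$ and $b < a$: in each case, positivity of the sign product forces $c > b$ or $c < b$ respectively, so the inequalities chain as $c > b > a$ or $c < b < a$.

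As a by-product, the same chain gives $|r_t^\text{agg}| = |c-b| + |b-a| > |r_t|$. This supports the claim in the main text that Definition~\ref{def: consistent turning point} retains only turning points with $|r_t^\text{agg}| > |r_t|$. It also shows that every consistent turning point satisfies the third condition of Definition~\ref{def: turning point}, since $\mathrm{sign}(c-a) = \mathrm{sign}(b-a)$, which proves the subset relation.
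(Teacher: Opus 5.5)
Your proof is correct and is essentially the paper's argument: the paper, which swaps the names of your $a$ and $b$, splits on the sign of $r_t$ and uses the strict sign condition to obtain the chain $c > b > a$ or $c < b < a$ (in your lettering), and your identity $r_t^\text{agg} = (c-b)+(b-a)$ expresses the same reasoning in one line. Your by-products also match the paper: its next lemma derives $\lvert r_t^\text{agg}\rvert > \lvert r_t\rvert$ from the same chain, and its main text asserts the inclusion of consistent turning points in those of Definition~\ref{def: turning point}.
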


\begin{proof}
    Recall the definitions of our computed reward (Eq.~\ref{eq: r_t} and \ref{eq: aggregated reward for turning point})
    \begin{equation*}
        \begin{aligned}
        r_t &= R\big( \vx_{t-1}^{(t-1)} \big) - R\big( \vx_{t}^{(t)} \big), \\
        r_t^\text{agg} &= R\big( \vx_{0}^{(0)} \big) - R\big( \vx_{t}^{(t)} \big).
    \end{aligned}
    \end{equation*}

    From Definition~\ref{def: consistent turning point}, the only condition we need for this lemma is
    \begin{equation}\label{eq:tp-key-condition}
        \mathrm{sign}\big(R(\vx_{t-1}^{(t-1)}) - R(\vx_t^{(t)})\big) 
        \cdot \mathrm{sign}\big(R(\vx_0^{(0)}) - R(\vx_{t-1}^{(t-1)})\big) > 0.
    \end{equation}
    The other constraints in the definition ($s_{t+1} < 0$, $s_t > 0$, and $1 \le t \le T-1$) 
    are not used in this particular argument and are omitted here for clarity.

    Define the shorthand
    \begin{equation*}
    \left\{
    \begin{aligned}
        &a := R(\vx_{t-1}^{(t-1)}),\quad \\
        &b := R(\vx_{t}^{(t)}),\quad \\
        &c := R(\vx_{0}^{(0)}).
    \end{aligned}
    \right.
    \end{equation*}
    Then
    \[
        r_t = a - b, \qquad
        r_t^\text{agg} = c - b.
    \]
    In terms of $a,b,c$, the key consistency condition Eq.~\ref{eq:tp-key-condition} becomes
    \begin{equation}\label{eq:lemma-cond-consistency}
        \mathrm{sign}(a - b)\,\mathrm{sign}(c - a) > 0.
    \end{equation}

    From Eq.~\ref{eq:lemma-cond-consistency}, the product of the two signs is strictly positive, 
    so they must be equal:
    \[
        \mathrm{sign}(a - b) = \mathrm{sign}(c - a).
    \]
    Hence
    \begin{equation}\label{eq:lemma-signrt}
        \mathrm{sign}(r_t) = \mathrm{sign}(a - b) = \mathrm{sign}(c - a).
    \end{equation}

    We now show that $\mathrm{sign}(c - b) = \mathrm{sign}(c - a)$, which will imply
    $\mathrm{sign}(r_t) = \mathrm{sign}(r_t^\text{agg})$.

    Consider two cases:

    \paragraph{Case 1: $\mathrm{sign}(a - b) > 0$.}
    Then $a > b$, and by Eq.~\ref{eq:lemma-cond-consistency} we also have 
    $\mathrm{sign}(c - a) > 0$, so $c > a$. Consequently,
    \[
        c > a > b \quad\Rightarrow\quad c - b > 0.
    \]
    Thus $\mathrm{sign}(c - b) > 0 = \mathrm{sign}(c - a)$, and, together with 
    Eq.~\ref{eq:lemma-signrt}, we obtain
    \[
        \mathrm{sign}(r_t) = \mathrm{sign}(r_t^\text{agg}).
    \]

    \paragraph{Case 2: $\mathrm{sign}(a - b) < 0$.}
    Then $a < b$, and by Eq.~\ref{eq:lemma-cond-consistency} we have 
    $\mathrm{sign}(c - a) < 0$, so $c < a$. Consequently,
    \[
        c < a < b \quad\Rightarrow\quad c - b < 0.
    \]
    Thus $\mathrm{sign}(c - b) < 0 = \mathrm{sign}(c - a)$, and again by 
    Eq.~\ref{eq:lemma-signrt},
    \[
        \mathrm{sign}(r_t) = \mathrm{sign}(r_t^\text{agg}).
    \]

    In both cases, we conclude that
    \[
        \mathrm{sign}(r_t) = \mathrm{sign}(r_t^\text{agg}),
    \]
    which implies
    \[
        r_t \cdot r_t^\text{agg} > 0,
    \]
    since Eq.~\ref{eq:lemma-cond-consistency} enforces strict inequalities and excludes 
    the degenerate zero case. This completes the proof.
\end{proof}

\subsubsection{Comparison of Absolute Values of Local and Aggregated Rewards}\label{appsubsub: absolute value of def 5.1}

\begin{lemma}
    Under the same notation and condition Eq.~\ref{eq:tp-key-condition} as in the proof of
    Lemma~\ref{lemma: sign consistent by def 5.1}, if $r_t \cdot r_t^\text{agg} > 0$ holds, then
    \[
        \big\lvert r_t^\text{agg} \big\rvert > \big\lvert r_t \big\rvert.
    \]
\end{lemma}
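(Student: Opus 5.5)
We reuse the shorthand from the proof of Lemma~\ref{lemma: sign consistent by def 5.1}: $a := R(\vx_{t-1}^{(t-1)})$, $b := R(\vx_t^{(t)})$ and $c := R(\vx_0^{(0)})$. In this notation $r_t = a-b$ and $r_t^\text{agg} = c-b$. The claim then becomes a statement about three real numbers. The key identity to exploit is the telescoping decomposition
\[
    r_t^\text{agg} = c - b = (c-a) + (a-b) = (c-a) + r_t .
\]
It writes the aggregated reward as the local reward plus the remaining change from $\vx_{t-1}^{(t-1)}$ to the final image.

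The plan is as follows. First invoke condition Eq.~\ref{eq:tp-key-condition}, that is, $\mathrm{sign}(a-b)\,\mathrm{sign}(c-a) > 0$. Exactly as in the proof of Lemma~\ref{lemma: sign consistent by def 5.1}, this forces both factors to be nonzero and of equal sign. So $c-a$ and $r_t$ are two nonzero reals with the same sign.

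For two reals $x,y$ of the same strict sign we have $|x+y| = |x| + |y|$. Applying this with $x = r_t$ and $y = c-a$ gives
\[
    |r_t^\text{agg}| = |r_t| + |c-a| > |r_t| ,
\]
where the strict inequality holds because $c \neq a$. One can equivalently split into the two cases $a>b$ (then $c>a>b$, so $c-b > a-b > 0$) and $a<b$ (then $c<a<b$, so $b-c > b-a > 0$), mirroring the case analysis in the preceding lemma.

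The hypothesis $r_t \cdot r_t^\text{agg} > 0$ is then automatically satisfied, by Lemma~\ref{lemma: sign consistent by def 5.1}, and is not needed beyond that. There is no real obstacle here. The only point requiring care is ensuring strictness, which comes from the strict sign product in Eq.~\ref{eq:tp-key-condition}, because that product excludes $c=a$.
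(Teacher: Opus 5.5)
Your proof is correct and takes essentially the same route as the paper. The paper uses the same shorthand $a,b,c$, derives the strict chains $c>a>b$ or $c<a<b$ from the key sign condition, and concludes $|c-b|>|a-b|$ case by case; your telescoping identity $r_t^\text{agg} = (c-a) + r_t$ is the additive form of that ordering argument, and it also gives the exact gap $|r_t^\text{agg}| - |r_t| = |c-a|$.
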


\begin{proof}
    We use the same shorthand as before:
    \begin{equation*}
    \left\{
    \begin{aligned}
        &a := R(\vx_{t-1}^{(t-1)}),\quad \\
        &b := R(\vx_{t}^{(t)}),\quad \\
        &c := R(\vx_{0}^{(0)}).
    \end{aligned}
    \right.
    \end{equation*}
    so that
    \[
        r_t = a - b, \qquad
        r_t^\text{agg} = c - b.
    \]
    From the previous lemma, the condition Eq.~\ref{eq:tp-key-condition} implies
    \[
        \mathrm{sign}(r_t) = \mathrm{sign}(r_t^\text{agg}),
    \]
    \textit{i.e.}, $r_t$ and $r_t^\text{agg}$ have the same sign, and therefore $r_t \cdot r_t^\text{agg} > 0$.

    To show $\lvert r_t^\text{agg} \rvert > \lvert r_t \rvert$, we analyze the two possible sign cases.

    \paragraph{Case 1: $r_t > 0$ and $r_t^\text{agg} > 0$.}
    In this case,
    \[
        a - b > 0 \quad\text{and}\quad c - b > 0,
    \]
    which means $a > b$ and $c > b$. The consistency condition Eq.~\ref{eq:tp-key-condition} gives
    \[
        \mathrm{sign}(a - b)\,\mathrm{sign}(c - a) > 0.
    \]
    Since $a - b > 0$, we must have $c - a > 0$, and hence
    \[
        c > a > b.
    \]
    Therefore,
    \[
        c - b > a - b \quad\Rightarrow\quad r_t^\text{agg} > r_t > 0.
    \]
    Taking absolute values preserves the inequality:
    \[
        \lvert r_t^\text{agg} \rvert = r_t^\text{agg} > r_t = \lvert r_t \rvert.
    \]

    \paragraph{Case 2: $r_t < 0$ and $r_t^\text{agg} < 0$.}
    In this case,
    \[
        a - b < 0 \quad\text{and}\quad c - b < 0,
    \]
    which means $a < b$ and $c < b$. Again, from Eq.~\ref{eq:tp-key-condition},
    \[
        \mathrm{sign}(a - b)\,\mathrm{sign}(c - a) > 0.
    \]
    Since $a - b < 0$, we must have $c - a < 0$, and hence
    \[
        c < a < b.
    \]
    Therefore,
    \[
        c - b < a - b < 0 \quad\Rightarrow\quad r_t^\text{agg} < r_t < 0.
    \]
    Taking absolute values reverses the inequality:
    \[
        \lvert r_t^\text{agg} \rvert = -(c - b) > -(a - b) = \lvert r_t \rvert.
    \]

    In both cases, we obtain
    \[
        \big\lvert r_t^\text{agg} \big\rvert > \big\lvert r_t \big\rvert,
    \]
    which proves the claim.
\end{proof}

\subsection{Why Sparse Rewards Are Problematic for Flow-GRPO}
The core issue brought by sparse rewards in Flow-GRPO is the mismatch between the Flow Matching model's time-dependent input and its time-invariant reward signal. In Flow-GRPO, the gradient $\nabla_\theta \mathcal{J}(\theta) \approx \mathbb{E} _{p,t} \left[  \nabla _\theta \log p _\theta(\vx _{t-1}|\vx _t, t) \cdot A(R(\vx _0)) \right]$ uses an advantage $A(R(\vx _0))$ that is constant across all denoising steps $t$. This results in a non-injective mapping where the model receives identical supervision for diverse denoising actions in a trajectory, ignoring the temporal nature of the velocity field. Our method introduces time-injective reward assignment. By computing step-wise rewards $r _t$ and long-term rewards $r _t^{\text{agg}}$, we transform the advantage into $A _t = A(r _t \text{ or } r _t ^{\text{agg}})$, which explicitly incorporates $t$. This restores the injectivity between the policy's conditioning input $t$ and the optimization signal: $p _\theta(\vx _{t-1}|\vx _t, t) \to A _t$. Thus, TP-GRPO allows the policy to isolate the local gain of a denoising action from the global trajectory effect, effectively resolving the misalignment brought by sparse reward.

\section{Balancing Operations for Improved Optimization with TP-GRPO}\label{app: balancing operations}
When deploying TP-GRPO, we also introduce a balancing strategy to control the frequency of reward replacement within each batch. Since $r_t^\text{agg}$ can be positive or negative, a batch dominated by negative values may cause the policy to reject most actions, while a batch dominated by positive values may lead to overly conservative updates and insufficient exploration. To prevent either partition from dominating the optimization, we compute the number of positive and negative $r_t^\text{agg}$ in each batch and enforce a balanced selection. Specifically, we only keep an equal number of positive and negative samples for replacing $r_t$. The remaining samples are discarded by ranking $|r_t^\text{agg}|$ and dropping those with the smallest magnitudes.

\newpage
\section{Pseudocode}\label{app: pseudocode}

\begin{algorithm}[ht]
\caption{TP-GRPO}
\label{alg: tp-grpo}
\begin{algorithmic}

\REQUIRE
Policy (score) model $p_\theta$, reward model $R(\cdot)$, SDE noise level $\alpha$,
KL regularization coefficient $\beta$, group size $G$, number of sampling steps $T$,
number of training iterations $K$, number of samples per iteration $N$.

\FOR{$k = 0$ \textbf{to} $K-1$}
  \STATE \textbf{// Stage 1: Sample trajectories and collect intermediate rewards}
  \FOR{$i = 0$ \textbf{to} $N-1$}
    \STATE Sample terminal noise $\vx_{T,i}$ and corresponding prompt $\vc_i$.
    \STATE Run SDE sampling (Eq.~\ref{eq: SDE sampling}) from $t = T$ to $t = 0$ to obtain
           a noisy trajectory $\{\vx_{t,i}\}_{t=T}^{0}$.
    \STATE For each $t$, run $t$ ODE steps starting from $\vx_{t,i}$ to obtain
           the corresponding clean sample $\vx^{(t)}_{t,i}$.
    \STATE Compute intermediate rewards $\{R(\vx^{(t)}_{t,i})\}_{t=T}^{0}$.
  \ENDFOR

  \STATE \textbf{// Stage 2: Optimize $p_\theta$ with TP-GRPO}
  \FOR{$t = T$ \textbf{to} $2$}
    \FOR{each group $\{\vx_{t,g}\}_{g=1}^{G}$}
      \FOR{each $g \in \{1,\dots,G\}$}
        \STATE Initialize flag \texttt{use\_aggregated\_effect} $\leftarrow$ \textbf{False}.

        \IF{$t = T$ \AND \texttt{select good starting point}}
          \STATE \textbf{// First step: apply Remark~\ref{def: constraint on first step}}
          \IF{$\text{sign}\!\big(R(\vx^{(T-1)}_{T-1,g}) - R(\vx^{(T)}_{T,g})\big)
              \cdot
              \text{sign}\!\big(R(\vx^{(0)}_{0,g}) - R(\vx^{(T)}_{T,g})\big) > 0$}
            \STATE \texttt{use\_aggregated\_effect} $\leftarrow$ \textbf{True}
          \ENDIF
        \ELSE
          \STATE \textbf{// Later steps: select turning points via Definition~\ref{def: consistent turning point}}
          \STATE Compute $s_t$ and $s_{t+1}$ according to Eq.~\ref{eq: s_t}.
          \IF{$s_{t+1} < 0$ \AND $s_t > 0$ \AND
              $\text{sign}\!\big(R(\vx^{(t-1)}_{t-1,g}) - R(\vx^{(t)}_{t,g})\big)
              \cdot
              \text{sign}\!\big(R(\vx^{(0)}_{0,g}) - R(\vx^{(t)}_{t,g})\big) > 0$}
            \STATE \texttt{use\_aggregated\_effect} $\leftarrow$ \textbf{True}
          \ENDIF
        \ENDIF

        \IF{\texttt{use\_aggregated\_effect}}
          \STATE Compute aggregated reward $r_{t,g}^{\text{agg}}$ using Eq.~\ref{eq: aggregated reward for turning point}.
        \ELSE
          \STATE Compute stepwise reward $r_{t,g}$ using Eq.~\ref{eq: r_t}.
        \ENDIF

      \ENDFOR
    \ENDFOR

    \STATE Compute advantages using $r_t$ and $r_t^{\text{agg}}$ (when applicable),
           according to Eq.~\ref{eq: GRPO advantage}.
    \STATE Update model parameters $\theta$ using the GRPO objective in Eq.~\ref{eq: GRPO computation}.
  \ENDFOR
\ENDFOR

\end{algorithmic}
\end{algorithm}

\section{More Analysis on Turning Points}\label{app: more analysis on turning points}
In this section, we provide additional discussions and visualizations to illustrate the effects of turning points.

\subsection{Contrast with Credit Assignment Techniques}

Credit assignment, \textit{i.e.}, assigning rewards to different steps of a trajectory, is a fundamental challenge in RL. Prior works \cite{malkin2022trajectory, madan2023learning} provide useful insights, but they typically rely on algebraic constraints such as Trajectory Balance and require an explicit target density. In contrast, our setting considers continuous denoising trajectories optimized with external, non-differentiable image-scoring models, where such structures are unavailable. Our turning-point locating mechanism instead captures trajectory-level structure to provide more informative signals for RL fine-tuning.

\subsection{Subtraction-based Rewards for Normal Points vs. Turning Points} 

We analyze the subtraction-based rewards $r_t$ and $r_t^\text{agg}$ at the intermediate step $t=6$ as an example in Figure~\ref{fig: reward mean comparison}. By separately computing the mean rewards for turning points and normal points across training iterations, we observe that their reward distributions are actually distinct. This confirms that our method effectively assigns different reward signals to these two categories of points, tailoring the feedback to the specific role of the action within the trajectory. By integrating both types of rewards into the total reward, our approach provides a more comprehensive objective, enabling it to outperform standard Flow-GRPO.

\begin{figure}[h]
    \centering
    \includegraphics[width=.7\linewidth]{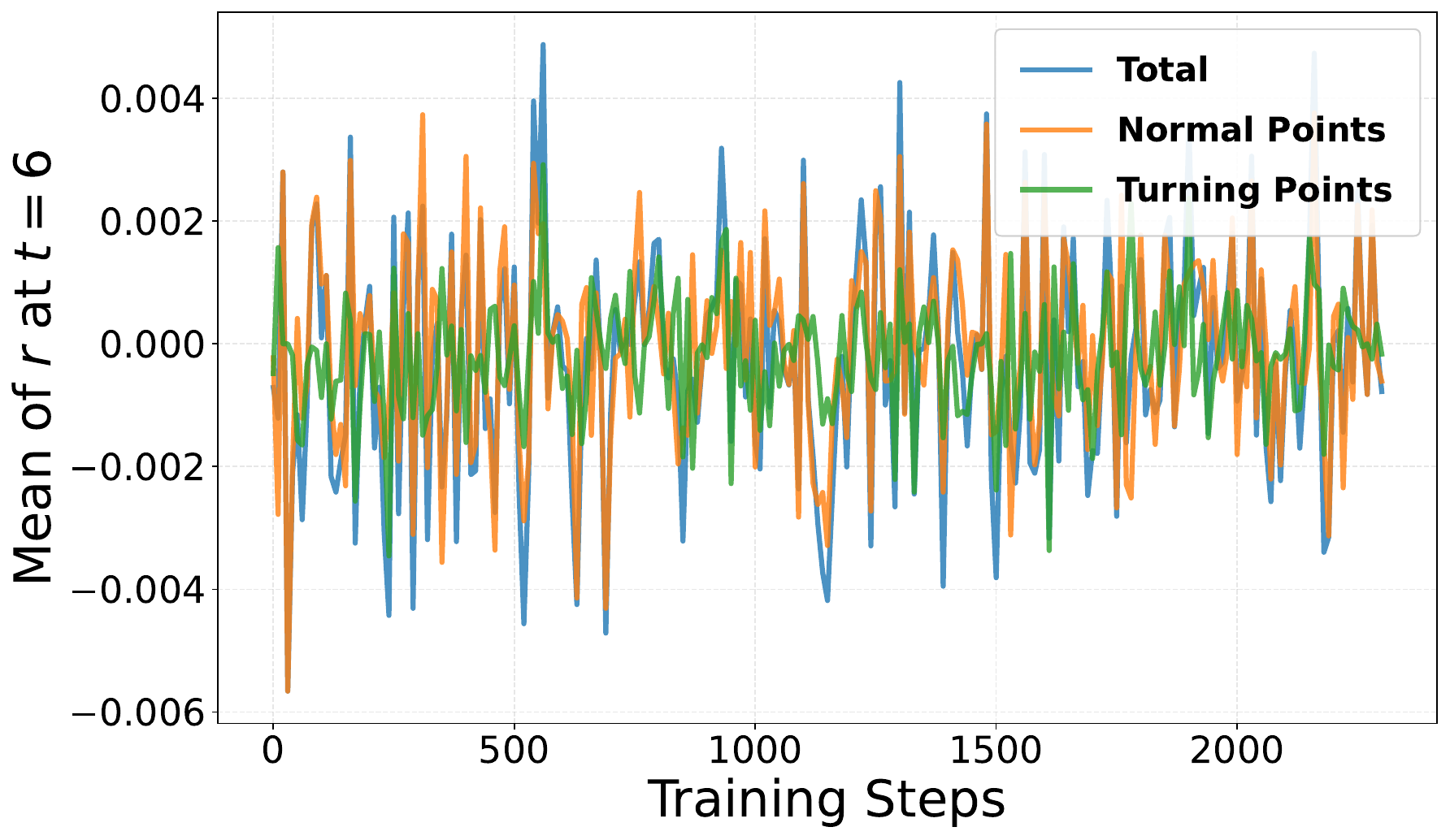}
    \caption{Our rewards computed at the intermediate step $t=6$.}
    \label{fig: reward mean comparison}
    \vspace{-10pt}
\end{figure}

\subsection{Average Count of Good and Bad Turning Points}

We tracked the average number of turning points at within a batch of 36 samples in Figure~\ref{fig: turning point count}, distinguishing between ``good" turning points (those guiding the trajectory toward higher rewards) and ``bad" turning points (those leading to lower rewards). We observe that the frequency of turning points does not diminish as training progresses, e.g., the \textbf{combined count} remains stable at a ratio of approximately 4–6/36 (roughly 0.11–0.16). This ensures a steady stream of informative signals, allowing TP-GRPO to effectively guide the policy throughout the optimization process.

\begin{figure}[h]
    \centering
    \includegraphics[width=.7\linewidth]{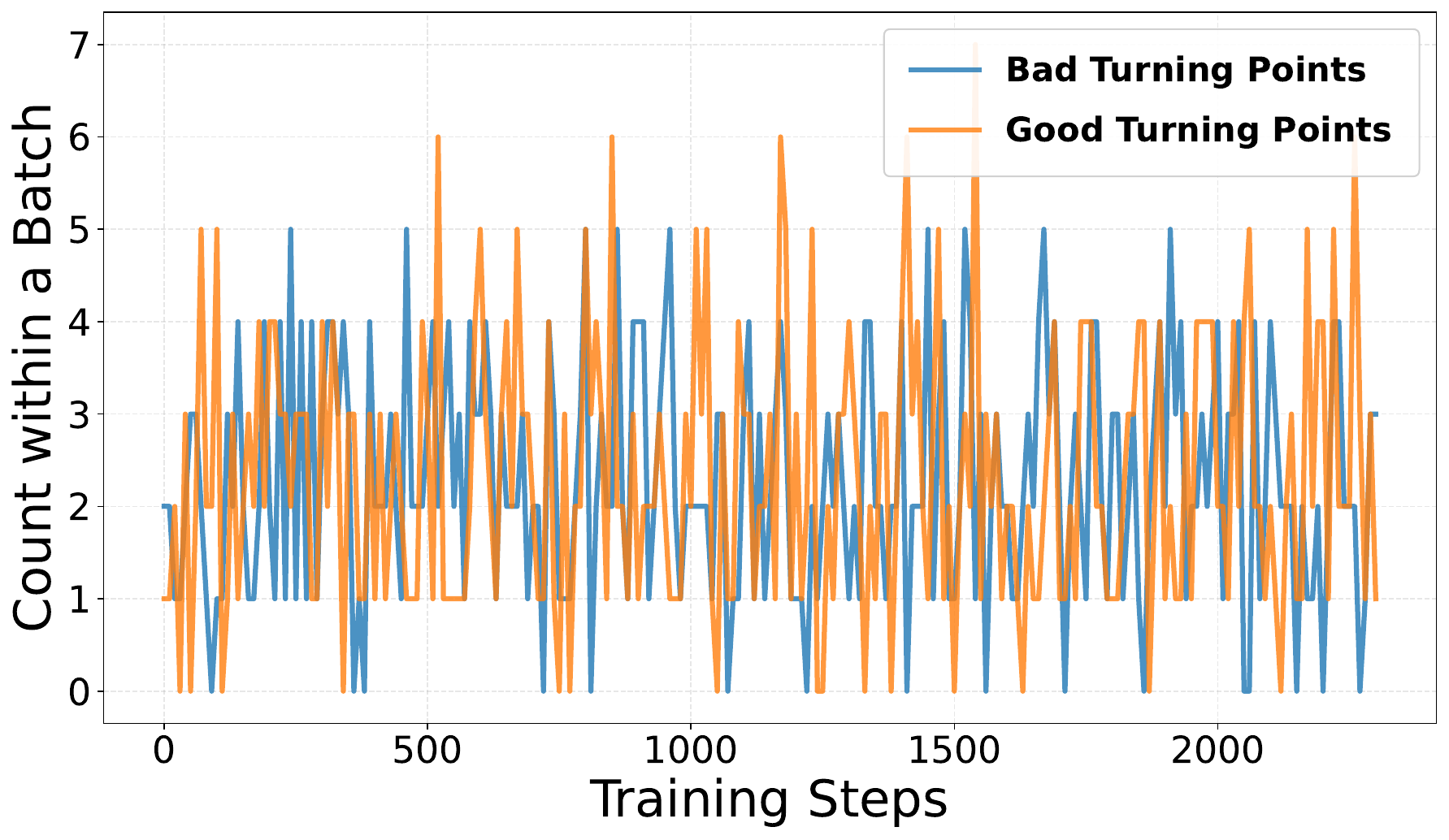}
    \caption{The average count of both ``good" and ``bad" turning points within a batch of 36 samples during the training process.}
    \label{fig: turning point count}
    \vspace{-10pt}
\end{figure}

\newpage
\section{More Qualitative Comparisons}\label{app: more qualititive results}

In this section, we present additional qualitative comparisons on three tasks. Figure~\ref{fig: extra geneval}, Figure~\ref{fig: extra ocr}, and Figure~\ref{fig: extra pickscore} show results on Compositional Image Generation, Visual Text Rendering, and Human Preference Alignment, respectively.

In Compositional Image Generation, we observe that Flow-GRPO sometimes fails to generate accurate images. For example, it produces an unnecessary sandwich for the prompt in the second column, and some generations lose fine details, making them hard to discern (Columns 3 and 5). We attribute this to insufficient reward signals: since the task is rule-driven, the outcome-based reward is inherently sparse, and when the policy model cannot obtain informative process rewards, generation may fail. In contrast, our method provides step-wise rewards, enabling more stable optimization.

For Visual Text Rendering, most samples are rendered well. However, Flow-GRPO occasionally omits short words (\textit{e.g.}, ``the" in the fourth column), and some characters overlap, leading to an inaccurate appearance. Our method performs consistently well on this task while preserving the overall aesthetics of the images.

For Human Preference Alignment, our method exhibits strong capability in capturing details and aligning with the prompt (\textit{e.g.}, the ``webs" in the fourth column and the ``digital art" style in the fifth column). The layout is also more reasonable: for instance, in the second column, our generation separates the city landscape from the planet as implicitly indicated by the prompt, whereas Flow-GRPO's generation confounds these two concepts.

\begin{figure}[h]
    \centering
    \includegraphics[width=\linewidth]{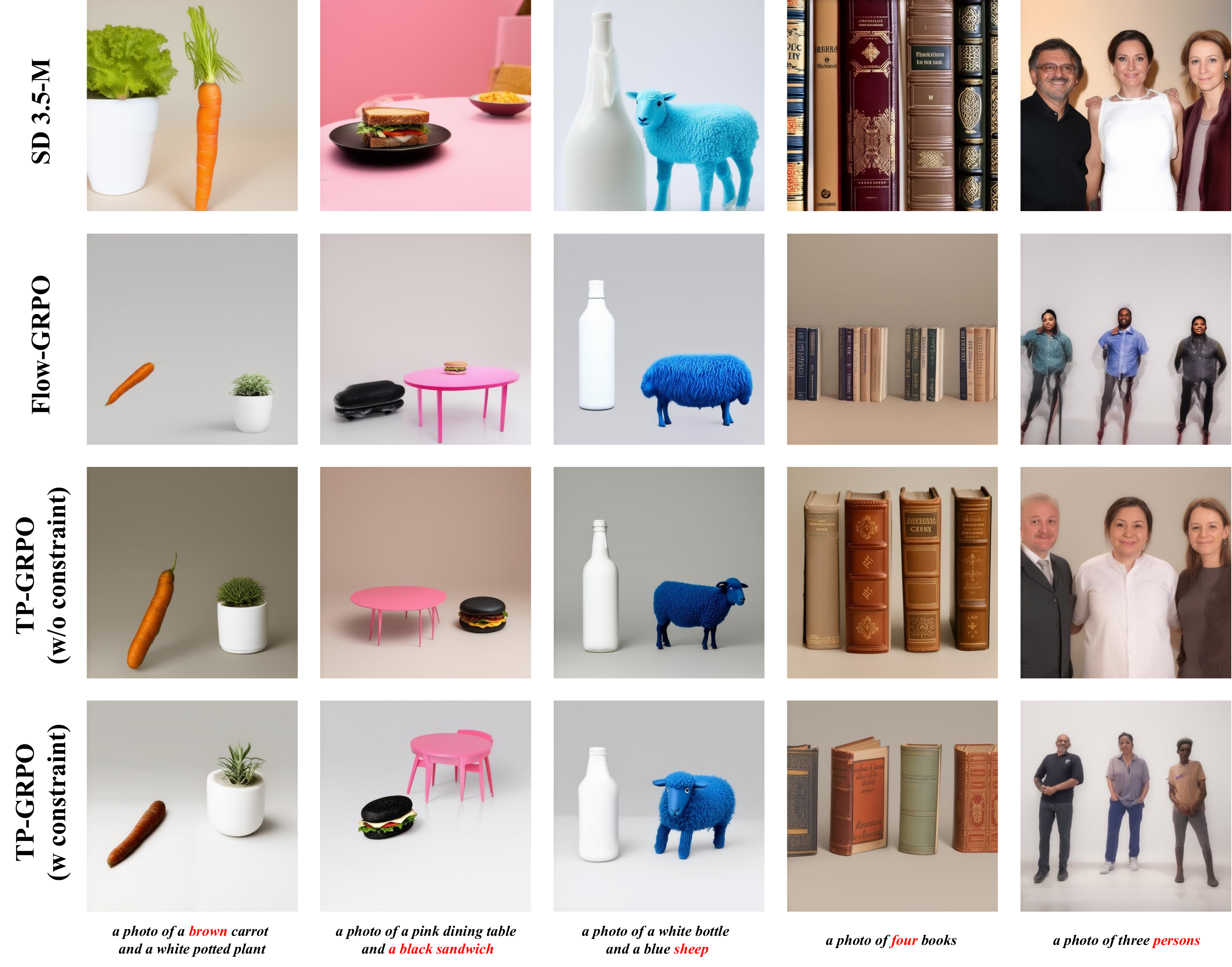}
    \caption{Additional qualitative comparison on the Compositional Image Generation task.}
    \label{fig: extra geneval}
\end{figure}

\begin{figure}[!t]
    \centering
    \includegraphics[width=\linewidth]{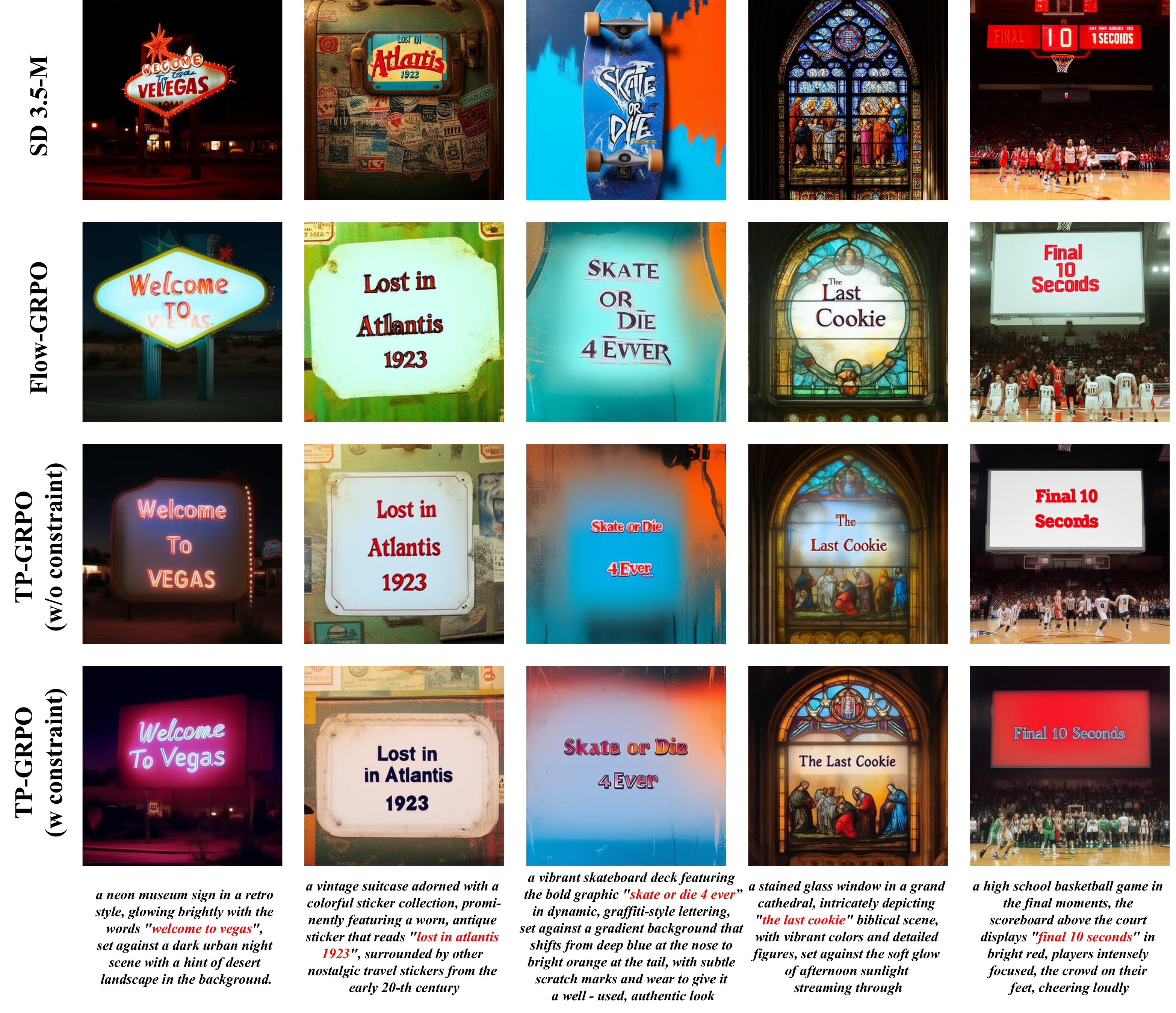}
    \caption{Additional qualitative comparison on the Visual Text Rendering task.}
    \label{fig: extra ocr}
\end{figure}

\begin{figure}[!t]
    \centering
    \includegraphics[width=\linewidth]{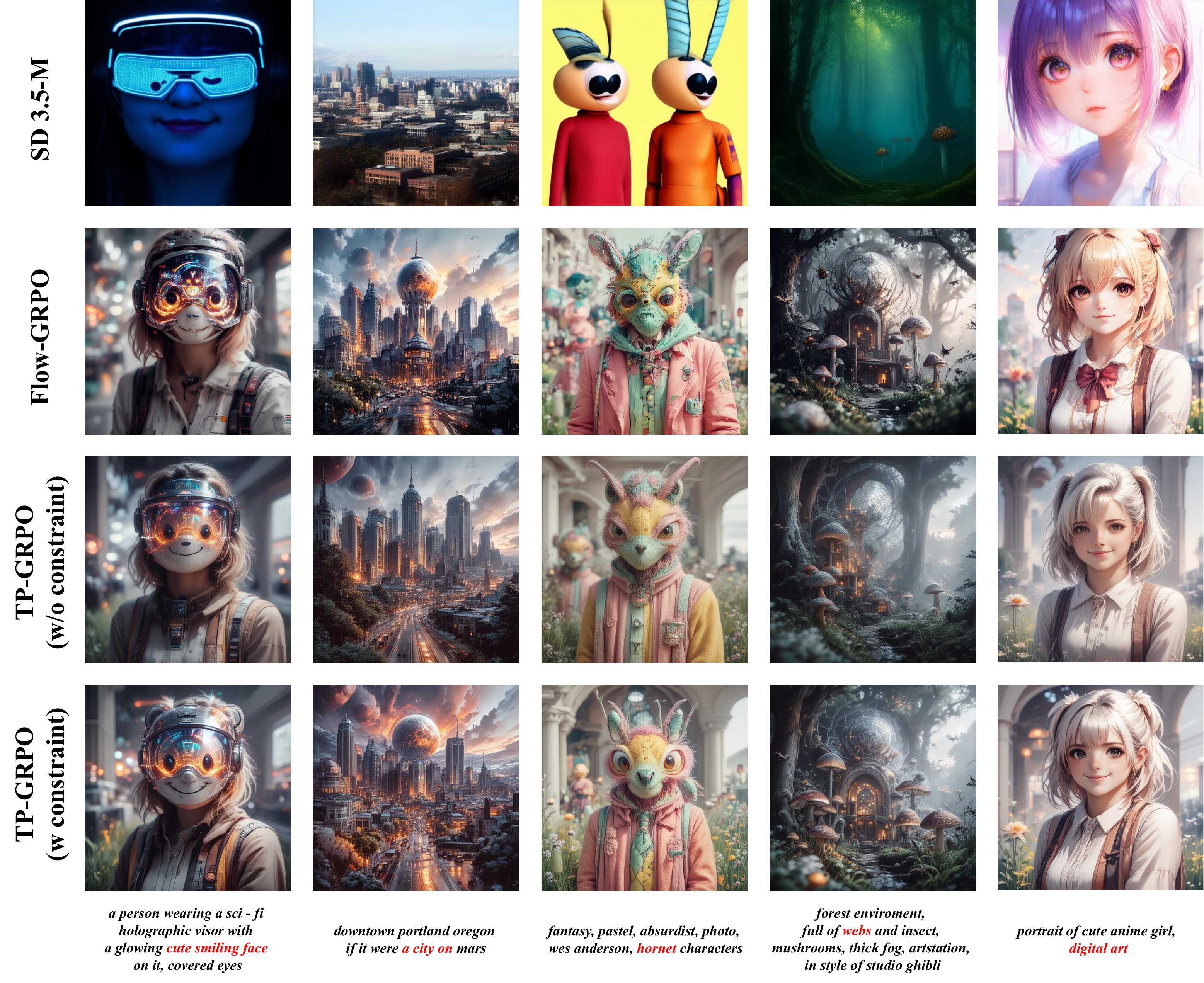}
    \caption{Additional qualitative comparison on the Human Preference Alignment task.}
    \label{fig: extra pickscore}
\end{figure}

\end{document}